\def\eqref#1{equation~\ref{#1}}
\def\1{\bm{1}}
\DeclareMathAlphabet{\mathsfit}{\encodingdefault}{\sfdefault}{m}{sl}
\SetMathAlphabet{\mathsfit}{bold}{\encodingdefault}{\sfdefault}{bx}{n}
\newcommand{\KL}{D_{\mathrm{KL}}}
\DeclareMathOperator*{\argmin}{arg\,min}
\newcommand{\cmark}{\ding{51}}%
\newcommand{\xmark}{\ding{55}}%
\renewcommand{\eqref}[1]{Eq. (\ref{#1})}
\newcommand{\NN}{\mathcal{N}}
\newcommand{\LL}{\mathcal{L}}
\newcommand{\EE}{\mathbb{E}}
\newcommand{\RR}{\mathbb{R}}
\newcommand{\WW}{\mathbb{W}}
\newcommand{\QQ}{\mathbb{Q}}
\DeclareMathOperator{\SB}{SB}
\DeclareMathOperator{\resize}{resize}
\DeclareMathOperator{\Reg}{Reg}
\DeclareMathOperator{\Adv}{Adv}
\DeclareMathOperator{\UNSB}{UNSB}
\newtheorem{theorem}{Theorem}
\newtheorem{lemma}{Lemma}
\title{Unpaired Image-to-Image Translation via \\ Neural Schr\"{o}dinger Bridge}
\author{
Beomsu Kim$^{* \, 1}$ \quad Gihyun Kwon$^{* \, 2}$ \quad Kwanyoung Kim$^1$ \quad Jong Chul Ye$^1$ \\
$^*$Equal contribution \\
$^1$Kim Jaechul Graduate School of AI, KAIST \\
$^2$Department of Bio and Brain Engineering, KAIST \\
%Korea Advanced Institute of Science and Technology (KAIST) \\
\texttt{\{beomsu.kim,cyclomon,cubeyoung,jong.ye\}@kaist.ac.kr}
}
\begin{document}

\maketitle

\begin{abstract}
Diffusion models are a powerful class of generative models which simulate stochastic differential equations (SDEs) to generate data from noise. While diffusion models have achieved remarkable progress, they have limitations in unpaired image-to-image (I2I) translation tasks due to the Gaussian prior assumption. Schr\"{o}dinger Bridge (SB), which learns an SDE to translate between two arbitrary distributions, have risen as an attractive solution to this problem. Yet, to our best knowledge, none of SB models so far have been successful at unpaired translation between high-resolution images. In this work, we propose Unpaired Neural Schr\"{o}dinger Bridge (UNSB), which expresses the SB problem as a sequence of adversarial learning problems. This allows us to incorporate advanced discriminators and regularization to learn a SB between unpaired data. We show that UNSB is scalable and successfully solves various unpaired I2I translation tasks. Code: \url{https://github.com/cyclomon/UNSB}
\end{abstract}

\section{Introduction}

Diffusion models \citep{dickstein2015,ho2020,song2021ddim,song2021}, a class of generative model which generate data by simulating stochastic differential equations (SDEs), have achieved remarkable progress over the past few years. 
They are capable of diverse and high-quality sample synthesis \citep{xiao2022}, a desiderata which was difficult to obtain for several widely acknowledged models such as Generative Adversarial Networks (GANs) \citep{goodfellow2014}, Variational Autoencoders (VAEs) \citep{kingma2014}, and flow-based models \citep{dinh2015}.
Furthermore, the iterative nature of diffusion models proved to be useful for a variety of downstream tasks, e.g., image restoration \citep{chung2023} and conditional generation \citep{rombach2022}.

However, unlike GANs and their family which are free to choose any prior distribution, diffusion models often assume a simple prior, such as the Gaussian distribution.
In other words, the initial condition for diffusion SDEs is generally fixed to be Gaussian noise.
The Gaussian assumption prevents diffusion models from unlocking their full potential in unpaired image-to-image translation tasks such as domain transfer, style transfer, or unpaired image restoration.

Schr\"{o}dinger bridges (SBs), a subset of SDEs, present a promising solution to this issue.
They solve the entropy-regularized optimal transport (OT) problem, enabling translation between two arbitrary distributions. Their flexibility have motivated several approaches to solving SBs via deep learning.
For instance, \citet{bortoli2021} extends the Iterative Proportional Fitting (IPF) procedure to the continuous setting, \citet{chen2022} proposes likelihood training of SBs using Forward-Backward SDEs theory, and \citet{tong2023} solves the SB problem with a conditional variant of flow matching.

Some methods have solved SBs assuming one side of the two distributions is simple.  \citet{wang2021deep} developed a two-stage unsupervised procedure for estimating the SB between a Dirac delta and data.
I$^2$SB \citep{liu2023} and InDI \citep{delbracio2023} uses paired data to learn SBs between Dirac delta and data.
\citet{su2023} discovered DDIMs \citep{song2021ddim} as SBs between data and Gaussian distributions. Hence, they were able to perform image-to-image translation by concatenating two DDIMs, i.e., by passing through an intermediate Gaussian distribution.

Yet, to the best of our knowledge, no work so far has successfully trained SBs for direct translation between high-resolution images in the unpaired setting.
Most methods demand excessive computation, and even if it is not the case, we observe poor results.
In fact, all representative SB methods fail even on the simple task of translating points between two concentric spheres as dimension increases.

In this work, we first identify the main culprit behind the failure of SB for unpaired image-to-image translation as the curse of dimensionality.
As the dimension of the considered data increases, the samples become increasingly sparse, failing to capture the shape of the underlying image manifold. This sampling error then biases the SB optimal transport map, leading to an inaccurate SB.

% We then propose the Unpaired Neural Schr\"{o}dinger Bridge (UNSB), which tackles the curse of dimensionality through two components: adversarial learning, and regularization.
% Adversarial learning exploits the generalization ability of deep neural nets to learn a smooth representation of the data manifold.
% Regularization further enforces the SB to learn a mapping which aligns with our inductive bias. Furthermore, both components are scalable, so UNSB is naturally scalable as well.

Based on the {self-similarity of SB}, referring to the intriguing property that SB restricted to a sub-interval of its time domain also a SB, we therefore propose the Unpaired Neural Schr\"{o}dinger Bridge (UNSB), which formulates the SB problem as a sequence of transport cost minimization problems under the constraint on  the KL divergence between the true target distribution and the model distribution.
We show that its Lagrangian formulation naturally expresses the SB a composition of generators learned via adversarial learning \citep{goodfellow2014}.
One of the important advantages of the UNSB formulation of SB is that it allows us to mitigate the curse of dimensionality on two levels:
we can extend the discriminator in adversarial learning to a more advanced one, and we can add regularization to enforce the generator to learn a mapping which aligns with our inductive bias.
Furthermore, all components of UNSB are scalable, so UNSB is naturally scalable as well to large scale image translation problems.

Experiments on toy and practical image-to-image translation tasks demonstrate that UNSB % indeed mitigates the curse of dimensionality.
%Hence, UNSB 
opens up a new research direction for applying diffusion models to large scale unpaired translation tasks. Our contributions can be summarized as follows.

\begin{itemize}
    \item We identify the cause behind the failure of previous SB methods for image-to-image translation as the curse of dimensionality. We empirically verify this by using a toy task  as a sanity check for whether an OT-based method is robust to the curse of dimensionality.
    \item We propose UNSB, which formulates SB  as Lagrangian formulation under the
constraint on the KL divergence between the true target distribution and the model distribution.  This leads to the composition of generators learned via adversarial learning that overcome the curse of dimensionality with advanced discriminators. 
    \item UNSB improves upon the Denoising Diffusion GAN \citep{xiao2022} by enabling translation between arbitrary distributions.
Furthermore, based on the comparison with other existing unpaired translation methods, we demonstrate that UNSB is indeed a generalization
of them by overcoming their shortcomings.

%COMPARISON OF UNSB WITH OTHER UNPAIRED TRANSLATION METHODS
    %We demonstrate UNSB passes the sanity check, which provides evidence that it is resilient to the curse of dimensionality. Furthermore, UNSB is successful at a variety of image-to-image translation tasks, while previous OT methods fail.
\end{itemize}

\begin{figure}[t]
\centering
\includegraphics[width=1.0\linewidth]{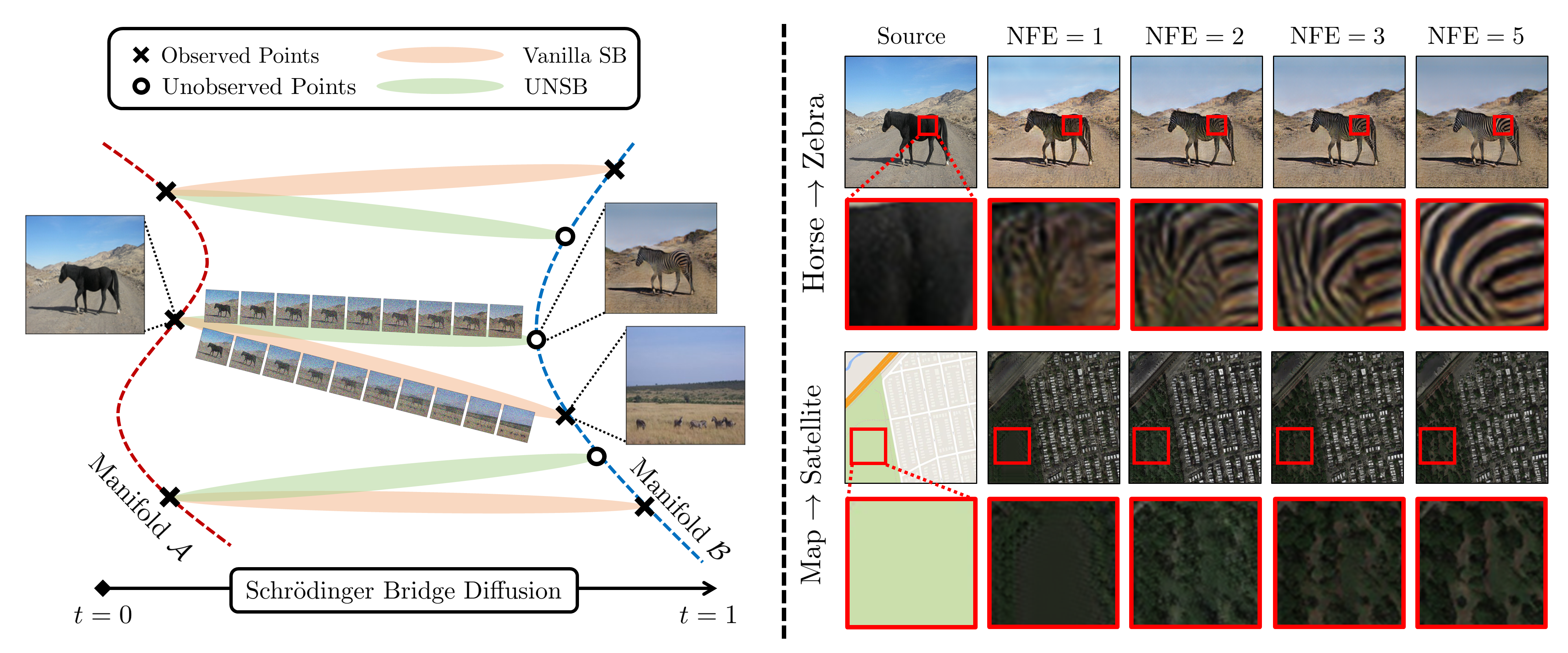}
\caption{\textbf{Left:} Illustration of trajectories for Vanilla SB and UNSB. Due to the curse of dimensionality, observed data in high dimensions become sparse and fail to describe image manifolds accurately. Vanilla SB learns optimal transport between observed data, leading to undesirable mappings. UNSB employs adversarial learning and regularization to learn an optimal transport mapping which successfully generalizes beyond observed data. \textbf{Right:} UNSB can be interpreted as successively refining the predicted target domain image, enabling the model to modify fine details while preserving semantics. See Section \ref{sec:unsb}. Here, \texttt{NFE} stands for the number of function evaluations.}
\label{fig:UNSB_main}
\vspace{-0.5cm}
\end{figure}

\section{Related Works}

\textbf{Schr\"{o}dinger Bridges.}
The Schrödinger bridge (SB) problem, commonly referred to as the entropy-regularized Optimal Transport (OT) problem~\citep{schrodinger1932theorie,leonard2013survey}, is the task of learning a stochastic process that transitions from an initial probability distribution to a terminal distribution over time, while subject to a reference measure. It is closely connected to the field of stochastic control problems~\citep{caluya2021reflected, chen2021stochastic}. Recently, the remarkable characteristic of the SB problem, which allows for the choice of arbitrary distributions as the initial and terminal distributions, has facilitated solutions to various generative modeling problems. In particular, novel algorithms leveraging Iterative Proportional Fitting (IPF)~\citep{bortoli2021,vargas2021solving} have been proposed to approximate score-based diffusion. Building upon these algorithms, several variants have been introduced and successfully applied in diverse fields such as, inverse problems ~\citep{shi2022conditional}, Mean-Field Games~\citep{liu2022deep}, constrained transport problems~\citep{tamir2023transport}, Riemannian manifolds~\citep{thornton2022riemannian}, and path samplers~\citep{zhang2021path,ruzayqat2023unbiased}. Some methods have solved SBs assuming one side of the two distributions is simple. In the unsupervised setting,  \citet{wang2021deep} first learns a SB between a Dirac delta and noisy data, and then denoises the noisy samples. In the supervised setting, I$^2$SB \citep{liu2023} and InDI \citep{delbracio2023} used paired data to learn SBs between Dirac delta and data. DDIB \citep{su2023} concatenates two SBs between data and the Gaussian distribution for image-to-image translation.
Recent works \citep{gushchin2023,shi2023} have achieved unpaired image-to-image translation for $\leq 128 \times 128$ resolution images. But, they are computationally intensive, often taking several days to train. To our best knowledge, our work represents the first endeavor to efficiently learn SBs between higher resolution unpaired images.

\textbf{Unpaired image-to-image translation.}
The aim of image-to-image translation (I2I) is to produce an image in the target domain that preserves the structural similarity to the source image. A seminal work in I2I is pix2pix~\citep{pix2pix}, which undertook the task with paired training images, utilizing a straightforward pixel-wise regularization strategy. However, this approach is not applicable when dealing with unpaired training settings. In the context of unpaired data settings, early methods like ~\citep{cyclegan,munit} maintained the consistency between the source and output images through a two-sided training strategy, namely cycle-consistency. However, these models are plagued by inefficiencies in training due to the necessity of additional generator training. In order to circumvent this issue, recent I2I models have shifted their focus to one-sided I2I translation. They aim to preserve the correspondence between the input and output through various strategies, such as geometric consistency~\citep{gcgan} and mutual information regularization~\citep{distancegan}. Recently, Contrastive Unpaired Translation (CUT) and its variants \citep{cut,HnegSRC,negcut,sesim} have demonstrated improvements in I2I tasks by refining patch-wise regularization strategies. Despite the impressive performance demonstrated by previous GAN-based models in the I2I domain, this paper  demonstrates that our SB-based approach paves the way for further improvements in the I2I task by introducing the iterative refinement through SB that overcomes the potential mode collapsing issue from a single-step generator from GAN. We believe that it represents a new paradigm  in the ongoing advancement of image-to-image translation.

\section{Schr\"{o}dinger Bridges and the Curse of Dimensionality} \label{sec:SB}

Given two distributions $\pi_0$, $\pi_1$ on $\RR^d$, the Schr\"{o}dinger Bridge problem (SBP) seeks the most likely random process $\{\bm{x}_t : t \in [0,1]\}$ that interpolates $\pi_0$ and $\pi_1$. Specifically, let $\Omega$ be the path space on $\RR^d$, i.e., the space of continuous functions from $[0,1]$ to $\RR^d$, and let $\mathcal{P}(\Omega)$ be the space of probability measures on $\Omega$. Then, the SBP solves
\begin{align}
\QQ^{\SB} = \argmin_{\QQ \in \mathcal{P}(\Omega)} \KL(\QQ \| \WW^\tau) \quad \text{s.t.} \quad \QQ_0 = \pi_0, \ \ \QQ_1 = \pi_1 \label{eq:sbp}
\end{align}
where $\WW^\tau$ is the Wiener measure with variance $\tau$, and $\QQ_t$ denotes the marginal of $\QQ$ at time $t$. We call $\QQ^{\SB}$ the Schr\"{o}dinger Bridge (SB) between $\pi_0$ and $\pi_1$.

The SBP admits multiple alternative formulations, and among those, the stochastic control formulation and the static formulation play crucial roles in our work. Both formulations will serve as theoretical bases for our Unpaired Neural Schr\"{o}dinger Bridge algorithm, and the static formulation will shed light on why previous SB methods have failed on unpaired image-to-image translation tasks.

\textbf{Stochastic control formulation.} The stochastic control formulation \citep{pra1991} shows that $\{\bm{x}_t\} \sim \QQ^{\SB}$ can be described by an It\^{o} SDE
\begin{align}
d\bm{x}_t = \bm{u}_t^{\SB} \, dt + \sqrt{\tau} \, d\bm{w}_t \label{eq:sde}
\end{align}
where the time-varying drift $\bm{u}_t^{\SB}$ is a solution to the stochastic control problem
\begin{align}
\bm{u}_t^{\SB} = \argmin_{\bm{u}} \EE \left[ \int_0^1 \frac{1}{2} \|\bm{u}_t\|^2 \, dt \right] \quad \text{s.t.} \quad
\begin{cases}
d\bm{x}_t = \bm{u}_t \, dt + \sqrt{\tau} \, d\bm{w}_t \\
\bm{x}_0 \sim \pi_0, \ \ \bm{x}_1 \sim \pi_1
\end{cases} \label{eq:scf}
\end{align}
assuming $\bm{u}$ satisfies certain regularity conditions. \eqref{eq:scf} says that among the SDEs of the form \eqref{eq:sde} with boundaries $\pi_0$ and $\pi_1$, the drift for the SDE describing the SB has minimum energy.
This formulation also reveals two useful properties of SBs. First, $\{\bm{x}_t\}$ is a Markov chain, and second, $\{\bm{x}_t\}$ converges to the optimal transport ODE trajectory as $\tau \rightarrow 0$. Intuitively, $\tau$ controls the amount of randomness in the trajectory $\{\bm{x}_t\}$.

\textbf{Static formulation.} The static formulation of SBP shows that sampling from $\QQ^{\SB}$ is extremely simple, assuming we know the joint distribution of the SB at $t \in \{0,1\}$, denoted as $\QQ^{\SB}_{01}$. Concretely, for $\{\bm{x}_t\} \sim \QQ^{\SB}$, conditioned upon initial and terminal points $\bm{x}_0$ and $\bm{x}_1$, the density of $\bm{x}_t$ can be described by a Gaussian density \citep{tong2023}
\begin{align}\label{eq:gaus}
p(\bm{x}_t | \bm{x}_0, \bm{x}_1) = \mathcal{N}(\bm{x}_t | t \bm{x}_1 + (1 - t) \bm{x}_0, t(1 - t) \tau \bm{I}).
\end{align}
Hence, to simulate the SB given an initial point $\bm{x}_0$, we may sample $\bm{x}_t$ according to
\begin{align}\label{eq:SBstatic}
\textstyle p(\bm{x}_t | \bm{x}_0) = \int p(\bm{x}_t | \bm{x}_0, \bm{x}_1) \, d\QQ^{\SB}_{1|0}(\bm{x}_1 | \bm{x}_0)
\end{align}
where $\QQ^{\SB}_{1|0}$ denotes the conditional distribution of $\bm{x}_1$ given $\bm{x}_0$.
Surprisingly, $\QQ^{\SB}_{01}$ is shown to be a solution to the entropy-regularized optimal transport problem
\begin{align}
\QQ^{\SB}_{01} = \argmin_{\gamma \in \Pi(\pi_0,\pi_1)} \EE_{(\bm{x}_0,\bm{x}_1) \sim \gamma} [\|\bm{x}_0 - \bm{x}_1\|^2] - 2 \tau H(\gamma) \label{eq:eot}
\end{align}
where $\Pi(\pi_0,\pi_1)$ is the collection of joint distributions whose marginals are $\pi_0$ and $\pi_1$, and $H$ denotes the entropy function. For discrete $\pi_0$ and $\pi_1$, it is possible to find $\QQ^{\SB}_{01}$ via the Sinkhorn-Knopp algorithm, and this observation has inspired several algorithms \citep{tong2023,pooladian2023} for approximating the SB.

\begin{wrapfigure}{r}{0.4\linewidth}
\centering
\includegraphics[width=1.0\linewidth]{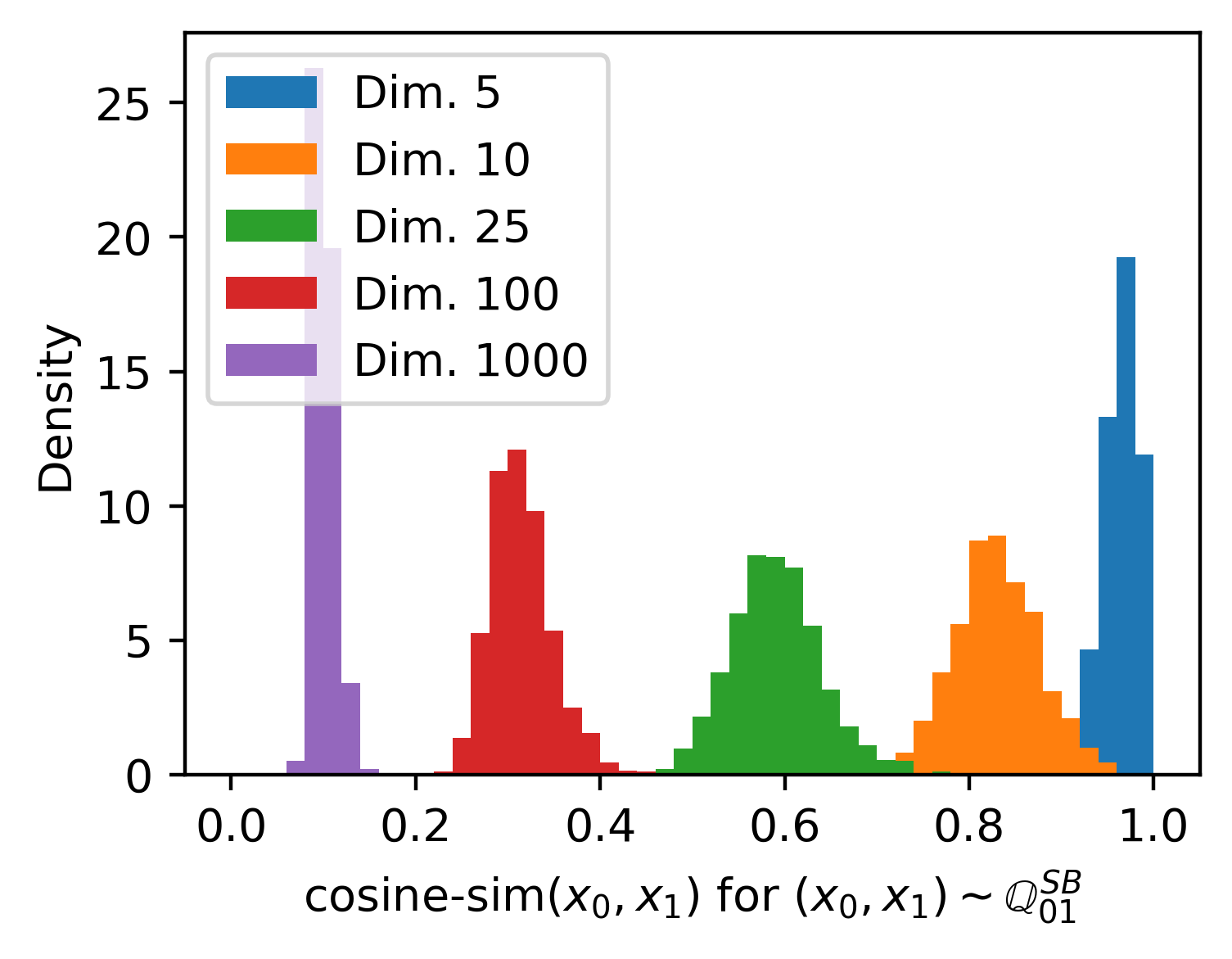}
\caption{Curse of dimensionality.}
\vspace{-0.3cm}
\label{fig:cod}
\end{wrapfigure}
\textbf{Curse of dimensionality.} %\label{sec:cod}
According to the static formulation of the SBP, any algorithm for solving the SBP can be interpreted as learning an interpolation according to the entropy-regularized optimal transport \eqref{eq:eot} of the marginals $\pi_0$ and $\pi_1$. However, in practice, we only have a finite number of samples $\{\bm{x}_0^{n}\}_{n = 1}^N \sim \pi_0$ and $\{\bm{x}_1^{m}\}_{m = 1}^M \sim \pi_1$. This means, the SB is trained to transport samples between the empirical distributions $\frac{1}{N} \sum_{n = 1}^N \delta_{\bm{x}_0^n}$ and $\frac{1}{M} \sum_{m = 1}^M \delta_{\bm{x}_1^m}$. Due to the curse of dimensionality, the samples fail to describe the image manifolds correctly in high dimension. Ultimately, $\QQ^{\SB}_{01}$ yield image pairs that do not meaningfully correspond to one another (see Figure \ref{fig:UNSB_main} or the result for Neural Optimal Transport (NOT) in Figure \ref{fig:UNSB_result}).

To illustrate this phenomenon in the simplest scenario, we consider the case where $\pi_0$ and $\pi_1$ are supported uniformly on two concentric $d$-spheres of radii $1$ and $2$, respectively. Then, samples from $\QQ^{\SB}_{01}$ should have near-one cosine similarity, since $\QQ^{\SB}$ should transport samples from $\pi_0$ radially outwards. However, when we draw $M=N=1000$ i.i.d. samples from $\pi_0$ and $\pi_1$, and calculate the cosine similarity between $(\bm{x}_0,\bm{x}_1) \sim \QQ^{\SB}_{01}$ where $\QQ^{\SB}_{01}$ is estimated using the Sinkhorn-Knopp algorithm, we observe decreasing similarity as dimension increases (see Figure \ref{fig:cod}). Thus, in a high dimension, $\QQ^{\SB}$ approximated by Sinkhorn-Knopp will interpolate between nearly orthogonal points.
% In the next section, we propose the Unpaired Neural Schr\"{o}dinger Bridge to combat the curse of dimensionality.

\section{Unpaired Neural Schr\"{o}dinger Bridge (UNSB)} \label{sec:unsb}

We now explain our novel  UNSB algorithm which shows that SB can be expressed as a composition of generators learned via adversarial learning. 
%To the best of our knowledge, this is a novel method for learning the SB. 

Specifically, given a partition $\{t_i\}_{i = 0}^N$ of the unit interval $[0,1]$ such that $t_0 = 0$, $t_N = 1$, and $t_i < t_{i+1}$, we can simulate SB according to the Markov chain decomposition
\begin{align}
p(\{\bm{x}_{t_n}\}) = p(\bm{x}_{t_N} | \bm{x}_{t_{N-1}}) p(\bm{x}_{t_{N-1}} | \bm{x}_{t_{N-2}}) \cdots p(\bm{x}_{t_1} | \bm{x}_{t_0}) p(\bm{x}_{t_0}). \label{eq:chain}
\end{align}
The decomposition \eqref{eq:chain} allows us to learn the SB inductively: we learn $p(\bm{x}_{t_{i+1}}|\bm{x}_{t_{i}})$ assuming we are able to sample from $p(\bm{x}_{t_i})$. Having approximated $p(\bm{x}_{t_{i+1}}|\bm{x}_{t_{i}})$, we can sample from $p(\bm{x}_{t_{i+1}})$, so we may learn $p(\bm{x}_{t_{i+2}}|\bm{x}_{t_{i+1}})$. Since the distribution of $\bm{x}_{t_0} = \bm{x}_0$ is already known as $\pi_0$, our main interest lies in learning the transition probabilities $p(\bm{x}_{t_{i+1}}|\bm{x}_{t_{i}})$  assuming we can sample from $p(\bm{x}_{t_i})$.
 This procedure is applied recursively for $i = 0, \ldots, N-1$.

%Thus, our goal is to learn $p(\bm{x}_{t_{i+1}}|\bm{x}_{t_{i}})$ assuming we can sample from $p(\bm{x}_{t_i})$. 
Let $q_{\phi_i}(\bm{x}_1 | \bm{x}_{t_i})$ be a conditional distribution parametrized by a DNN with parameter $\phi_i$. Intuitively, $q_{\phi_i}(\bm{x}_1 | \bm{x}_{t_i})$ is a generator which predicts the target domain image for $\bm{x}_{t_i}$. We define
% the joint and marginal
\begin{align}
q_{\phi_i}(\bm{x}_{t_i}, \bm{x}_1) \coloneqq q_{\phi_i}(\bm{x}_1 | \bm{x}_{t_i}) p(\bm{x}_{t_i}), \quad q_{\phi_i}(\bm{x}_1) \coloneqq \EE_{p(\bm{x}_{t_i})}[q_{\phi_i}(\bm{x}_1 | \bm{x}_{t_i})]. \label{eq:backward}
\end{align}
The following theorem shows how to optimize $\phi_i$ and sample from $p(\bm{x}_{t_{i+1}} | \bm{x}_{t_i})$.

\begin{theorem} \label{thm:correctness}
For any $t_i$, consider the following constrained optimization problem
\begin{align}
\min_{\phi_i} & \quad \LL_{\SB}(\phi_i,t_i) \coloneqq \EE_{q_{\phi_i}(\bm{x}_{t_i},\bm{x}_1)} [\|\bm{x}_{t_i} - \bm{x}_1\|^2] - 2 \tau (1 - t_i) H(q_{\phi_i}(\bm{x}_{t_i}, \bm{x}_1)) \label{eq:optim} \\
\text{s.t.} & \quad \LL_{\Adv}(\phi_i,t_i) \coloneqq \KL(q_{\phi_i}(\bm{x}_1) \| p(\bm{x}_1)) = 0 \label{eq:KL}
\end{align}
and define the distributions
\begin{align}
p(\bm{x}_{t_{i+1}} | \bm{x}_1, \bm{x}_{t_i}) \coloneqq \NN(\bm{x}_{t_{i+1}} | s_{i+1} \bm{x}_1 + (1 - s_{i+1}) \bm{x}_{t_i}, s_{i+1}(1 - s_{i+1}) \tau (1 - t_i) \bm{I}) \label{eq:rsb-inter}
\end{align}
where $s_{i+1} \coloneqq (t_{i+1} - t_i) / (1 - t_i)$ and
\begin{align}\label{eq:integ}
q_{\phi_i}(\bm{x}_{t_{i+1}} | \bm{x}_{t_i}) \coloneqq \EE_{q_{\phi_i}(\bm{x}_1 | \bm{x}_{t_i})}[p(\bm{x}_{t_{i+1}} | \bm{x}_{1}, \bm{x}_{t_i})], \quad q_{\phi_i}(\bm{x}_{t_{i+1}}) \coloneqq \EE_{p(\bm{x}_{t_i})}[q_{\phi_i}(\bm{x}_{t_{i+1}} | \bm{x}_{t_i})].
\end{align}
If $\phi_i$ solves \eqref{eq:optim}, then we have
\begin{align}
q_{\phi_i}(\bm{x}_1 | \bm{x}_{t_i}) = p(\bm{x}_1 | \bm{x}_{t_i}), \quad q_{\phi_i}(\bm{x}_{t_{i+1}} | \bm{x}_{t_i}) = p(\bm{x}_{t_{i+1}} | \bm{x}_{t_i}), \quad q_{\phi_i}(\bm{x}_{t_{i+1}}) = p(\bm{x}_{t_{i+1}}). \label{eq:solution}
\end{align}
\end{theorem}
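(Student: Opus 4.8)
The plan is to recognize the constrained problem \eqref{eq:optim}--\eqref{eq:KL} as the entropic optimal transport problem \eqref{eq:eot} for a time-rescaled Schr\"{o}dinger Bridge, use the self-similarity of SBs to pin down its unique minimizer as the true SB joint, and then push this identification through the Gaussian-bridge definitions \eqref{eq:rsb-inter}--\eqref{eq:integ} by elementary marginalization. First I would unpack the feasibility set: by construction in \eqref{eq:backward} the coupling $q_{\phi_i}(\bm{x}_{t_i},\bm{x}_1)$ has first marginal $p(\bm{x}_{t_i})$, and the constraint $\LL_{\Adv}(\phi_i,t_i)=\KL(q_{\phi_i}(\bm{x}_1)\|p(\bm{x}_1))=0$ forces its second marginal to equal $p(\bm{x}_1)=\pi_1$; hence a feasible $\phi_i$ is exactly one for which $q_{\phi_i}(\bm{x}_{t_i},\bm{x}_1)\in\Pi(p(\bm{x}_{t_i}),\pi_1)$. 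The objective $\LL_{\SB}(\phi_i,t_i)$ is then literally the functional appearing in \eqref{eq:eot}, with marginals $(p(\bm{x}_{t_i}),\pi_1)$ and Wiener variance $\tau(1-t_i)$, evaluated at the coupling $q_{\phi_i}(\bm{x}_{t_i},\bm{x}_1)$. So \eqref{eq:optim}--\eqref{eq:KL} is the EOT problem $\min_{\gamma\in\Pi(p(\bm{x}_{t_i}),\pi_1)}\EE_\gamma[\|\bm{x}_{t_i}-\bm{x}_1\|^2]-2\tau(1-t_i)H(\gamma)$, restricted to couplings that factor as $q_{\phi_i}(\bm{x}_1|\bm{x}_{t_i})p(\bm{x}_{t_i})$.

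Next I would identify the EOT minimizer. By the self-similarity of the SB (the SB conditioned on $\bm{x}_{t_i}$ and restricted to $[t_i,1]$ is again the SB, now between $p(\bm{x}_{t_i})$ and $\pi_1$), rescaling time via $s=(t-t_i)/(1-t_i)$ turns this restricted process into a standard SB on $[0,1]$ with Wiener variance $\tau(1-t_i)$. Applying the static formulation \eqref{eq:eot} to this rescaled SB shows that its endpoint joint — which is precisely the true joint $p(\bm{x}_{t_i},\bm{x}_1)$ — is the minimizer of the EOT functional from the previous paragraph. Since the entropy term with coefficient $\tau(1-t_i)>0$ makes this functional strictly convex in $\gamma$, the minimizer is unique. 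Therefore, provided the generator family is expressive enough that $p(\bm{x}_1\,|\,\bm{x}_{t_i})$ is attainable, any $\phi_i$ solving \eqref{eq:optim}--\eqref{eq:KL} yields $q_{\phi_i}(\bm{x}_{t_i},\bm{x}_1)=p(\bm{x}_{t_i},\bm{x}_1)$, hence $q_{\phi_i}(\bm{x}_1|\bm{x}_{t_i})=p(\bm{x}_1|\bm{x}_{t_i})$ for $p(\bm{x}_{t_i})$-a.e.\ $\bm{x}_{t_i}$, which is the first equality in \eqref{eq:solution}.

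To finish, I would apply the Gaussian-bridge identity \eqref{eq:gaus} to the rescaled SB at time $s_{i+1}=(t_{i+1}-t_i)/(1-t_i)$: this gives that the true bridge law $p(\bm{x}_{t_{i+1}}|\bm{x}_1,\bm{x}_{t_i})$ is exactly the Gaussian declared in \eqref{eq:rsb-inter}, the one routine check being that the variance $s_{i+1}(1-s_{i+1})\tau(1-t_i)$ equals the Brownian-bridge variance $\tau(t_{i+1}-t_i)(1-t_{i+1})/(1-t_i)$. Combining with the first equality of \eqref{eq:solution} and the definition \eqref{eq:integ},
\[
q_{\phi_i}(\bm{x}_{t_{i+1}}|\bm{x}_{t_i})=\EE_{q_{\phi_i}(\bm{x}_1|\bm{x}_{t_i})}[p(\bm{x}_{t_{i+1}}|\bm{x}_1,\bm{x}_{t_i})]=\EE_{p(\bm{x}_1|\bm{x}_{t_i})}[p(\bm{x}_{t_{i+1}}|\bm{x}_1,\bm{x}_{t_i})]=p(\bm{x}_{t_{i+1}}|\bm{x}_{t_i}),
\]
where the last step is marginalization over $\bm{x}_1$, valid because $t_i<t_{i+1}<1$ and $\{\bm{x}_t\}$ is Markov. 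Taking expectation over $p(\bm{x}_{t_i})$ and using \eqref{eq:integ} again then gives $q_{\phi_i}(\bm{x}_{t_{i+1}})=\EE_{p(\bm{x}_{t_i})}[p(\bm{x}_{t_{i+1}}|\bm{x}_{t_i})]=p(\bm{x}_{t_{i+1}})$, which is the remaining assertion of \eqref{eq:solution}.

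The main obstacle is the middle step: correctly performing the time-rescaling so that the restricted SB becomes a bona fide SB with variance $\tau(1-t_i)$, and then chaining self-similarity with the static formulation \eqref{eq:eot} (and the strict convexity/uniqueness of the entropic-OT minimizer, plus the implicit expressivity assumption on the generator) to conclude that solving \eqref{eq:optim}--\eqref{eq:KL} recovers the true SB conditional. Everything after that is bookkeeping with Gaussian bridges and the Markov/total-probability identities.
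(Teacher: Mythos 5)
Your proposal is correct and follows essentially the same route as the paper's proof: the self-similarity of the SB under restriction to $[t_i,1]$ (with time-rescaled Wiener variance $\tau(1-t_i)$), the static/entropic-OT formulation of the restricted bridge identifying $p(\bm{x}_{t_i},\bm{x}_1)$ as the minimizer under the marginal constraint enforced by the KL term, and then marginalization through the Gaussian bridge to get the remaining equalities. Your added remarks on strict convexity/uniqueness of the entropic-OT minimizer and the explicit variance check are welcome details the paper leaves implicit, but the argument is the same.
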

\begin{proof}[Proof Sketch.]
Using the stochastic control formulation of SB, we show that the SB satisfies a certain self-similarity property, which states that the SB restricted any sub-interval of $[0,1]$ is also a SB. 
Note that  Eqs. (\ref{eq:rsb-inter}), (\ref{eq:integ}) and (\ref{eq:optim}) under the constraint \eqref{eq:KL} are indeed counter-parts of the static formulation
Eqs. (\ref{eq:gaus}), (\ref{eq:SBstatic}), and (\ref{eq:eot}), respectively, for the restricted domain $[t_i,1]$.
The static formulation of SB restricted to the interval $[t_i,1]$ shows that $q_{\phi_i}(\bm{x}_1 | \bm{x}_{t_i}) = p(\bm{x}_1 | \bm{x}_{t_i})$ is the solution to \eqref{eq:optim}. %, and that $\bm{x}_{t_{i+1}}$ conditioned on $(\bm{x}_{t_i},\bm{x}_1)$ is distributed according to \eqref{eq:rsb-inter}. 
If we assume $\phi_i$ solves \eqref{eq:optim} such that $q_{\phi_i}(\bm{x}_1 | \bm{x}_{t_i}) = p(\bm{x}_1 | \bm{x}_{t_i})$, \eqref{eq:solution} follows by simple calculation. The detailed proof can be found in the Appendix.
\end{proof}

In practice, by incorporating the equality constraint in \eqref{eq:KL} into the loss with a Lagrange multiplier, we obtain the UNSB objective for a single time-step $t_i$
\begin{align}
\min_{\phi_i} \LL_{\UNSB}(\phi_i,t_i) \coloneqq \LL_{\Adv}(\phi_i,t_i) + \lambda_{\SB,t_i} \LL_{\SB}(\phi_i,t_i). \label{eq:UNSB}
\end{align}
Since it is impractical to use separate parameters $\phi_i$ for each time-step $t_i$ and to learn $p(\bm{x}_{t_{i+1}}|\bm{x}_{t_{i}})$ sequentially for $i = 0, \ldots, N - 1$, we replace $q_{\phi_i}(\bm{x}_1 | \bm{x}_{t_i})$, which takes $\bm{x}_{t_i}$ as input, with a time-conditional DNN $q_{\phi}(\bm{x}_1 | \bm{x}_{t_i})$, which shares a parameter $\phi$ for all time-steps $t_i$, and takes the tuple $(\bm{x}_{t_i},t_i)$ as input. Then, we optimize the sum of $\LL_{\UNSB}(\phi, t_i)$ over $i = 0,\ldots,N-1$.

\textbf{Training.} For the training of our UNSB, we first randomly choose a time-step $t_i$ to optimize. To calculate $\LL_{\UNSB}(\phi,t_i)$, we sample $\bm{x}_{t_i}$ and $\bm{x}_1 \sim \pi_1$, where $\pi_1$ denotes the target distribution. 
The sampling procedure of $\bm{x}_{t_i}$ will be described soon.
The sample $\bm{x}_{t_i}$ is then passed through $q_\phi(\bm{x}_1 | \bm{x}_{t_i})$ to obtain $\bm{x}_1(\bm{x}_{t_i})$ that refers to the estimated target data sample given $\bm{x}_{t_i}$. The pairs $(\bm{x}_{t_i},\bm{x}_1(\bm{x}_{t_i}))$ and $(\bm{x}_1,\bm{x}_1(\bm{x}_{t_i}))$ are then 
 used to compute $\LL_{\SB}(\phi,t_i)$ and $\LL_{\Adv}(\phi,t_i)$ in \eqref{eq:optim} and \eqref{eq:KL}, respectively.
Specifically, we estimate the entropy term in $\LL_{\SB}$ with a mutual information estimator, using the fact that for a random variable $X$, $I(X,X) = H(X)$ where $I$ denotes mutual information. We then estimate the divergence in $\LL_{\Adv}$ with adversarial learning. Intuitively, $\bm{x}_1$ and $\bm{x}_1(\bm{x}_{t_i})$ are ``real'' and ``fake'' inputs to the discriminator. This process is shown in the training stage of Figure \ref{fig:UNSB_flow}.

\textbf{Generation of intermediate and final samples.} 
We now describe the sampling procedure for the intermediate samples for training and  inference.
We simulate the Markov chain \eqref{eq:chain} using $q_\phi$ as follows: given $\bm{x}_{t_j} \sim q_\phi(\bm{x}_{t_j})$ (note that $\bm{x}_{t_j} = \bm{x}_0 \sim \pi_0$ if $j = 0$), we predict the target domain image $\bm{x}_1(\bm{x}_{t_j}) \sim q_\phi(\bm{x}_1 | \bm{x}_{t_j})$. We then sample $\bm{x}_{t_{j+1}} \sim q_\phi(\bm{x}_{t_{j+1}})$ according to \eqref{eq:rsb-inter} by interpolating $\bm{x}_0$ and $\bm{x}_1(\bm{x}_{t_j})$ and adding Gaussian noise. Repeating this procedure for $j = 0\ldots,i-1$, we get $\bm{x}_{t_i} \sim q_\phi(\bm{x}_{t_i})$.
With optimal $\phi$, by Theorem \ref{thm:correctness}, $\bm{x}_{t_i} \sim q_\phi(\bm{x}_{t_i}) = p(\bm{x}_{t_i})$.
Thus, the trajectory $\{\bm{x}_1(\bm{x}_{t_i}) : i = 0,\ldots,N-1\}$ can be viewed as an iterative refinement of the predicted target domain sample.
% In particular,  $\bm{x}_1(\bm{x}_{t_{N-1}})$ becomes the target sample at inference phase.
This process is illustrated within the generation stage of Figure \ref{fig:UNSB_flow}. 
\begin{figure}[t]
\centering
\includegraphics[width=1.0\linewidth]{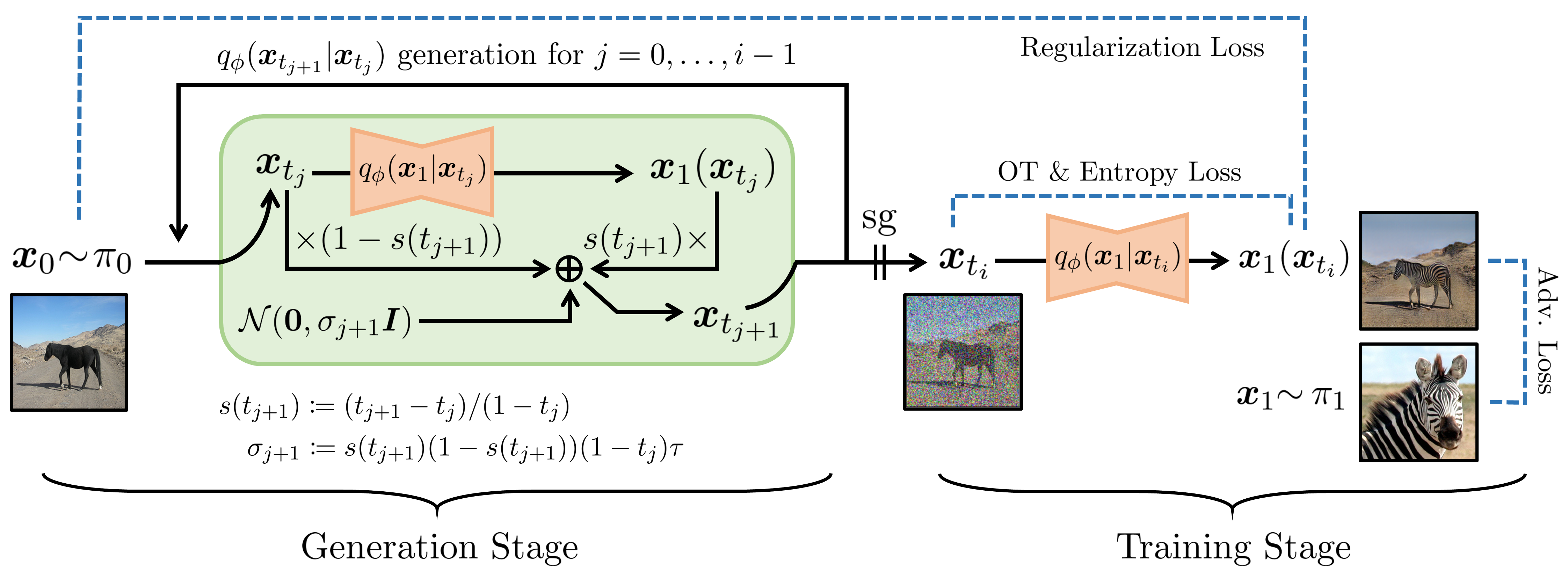}
\caption{Generation and training process of UNSB for time step $t_i$. \texttt{sg} means stop gradient.}
\label{fig:UNSB_flow}
\vspace{-0.3cm}
\end{figure}

\subsection{Combating the Curse of Dimensionality}

\textbf{Advanced discriminator in adversarial learning.} 
One of the most important advantages of the formulation \eqref{eq:UNSB} is that
we can replace the KL-divergence in $\LL_{\Adv}$ by any divergence or metric which measures discrepancy between two distributions. Such divergence or metric can be estimated through adversarial learning, i.e., its Kantorovich dual. For example,
\begin{align}
2 \cdot D_{\text{JSD}}(q_{\phi_i}(\bm{x}_1) \| p(\bm{x}_1)) - \log(4) = \max_D \EE_{p(\bm{x}_1)}[ \log D(\bm{x}_1) ] + \EE_{q_{\phi_i}(\bm{x}_1)}[\log(1 - D(\bm{x}_1))]
\end{align}
where $D$ is a discriminator. This allows us to use various adversarial learning techniques to mitigate the curse of dimensionality. For instance, instead of using a standard discriminator which distinguishes generated and real samples on the instance level, we can use a Markovian discriminator which distinguishes samples on the patch level. The Markovian discriminator is effective at capturing high-frequency characteristics (e.g., style) of the target domain data \citep{pix2pix}.

\textbf{Regularization.} %An advanced discriminator may not be enough to overcome the curse of dimensionality. Hence, 
Furthermore, we augment the UNSB objective with regularization, which 
enforces the generator network $q_\phi$ to satisfy  consistency between predicted $\bm{x}_1$ and the initial point $\bm{x}_0$:
\begin{align}
\LL_{\Reg}(\phi, t_i) \coloneqq \EE_{p(\bm{x}_0,\bm{x}_{t_i})}  \EE_{q_\phi(\bm{x}_1 | \bm{x}_{t_i})} [\mathcal{R}(\bm{x}_0, \bm{x}_1)] \label{eq:reg}
\end{align}
Here, $\mathcal{R}$ is a scalar-valued differentiable function which quantifies an application-specific measure of similarity between its inputs. In other words, $\mathcal{R}$ reflects our inductive bias for similarity between two images. Thus, the regularized UNSB objective for time $t_i$ is
\begin{align}
\LL_{\UNSB}(\phi,t_i) \coloneqq \LL_{\Adv}(\phi,t_i) + \lambda_{\SB,t_i} \LL_{\SB}(\phi,t_i) + \lambda_{\Reg,t_i} \LL_{\Reg}(\phi, t_i),
\end{align}
which is the final objective in our UNSB algorithm.

\subsection{Sanity Check on Toy Data}

\textbf{Two shells.} With the two shells data of Section \ref{sec:SB}, we show that representative SB methods suffer from the curse of dimensionality, whereas UNSB does not, given appropriate discriminator and regularization. 
For SB methods, we consider Sinkhorn-Knopp (SK), SB Conditional Flow Matching (SBCFM) \citep{tong2023}, Diffusion SB (DSB) \citep{bortoli2021}, and SB-FBSDE \citep{chen2022}. For baselines, we use recommended settings, and for UNSB, we train the discriminator to distinguish real and fake samples by input norms, and choose negative cosine similarity as $\mathcal{R}$ in \eqref{eq:reg}.
\begin{wraptable}{r}{0.5\linewidth}
\begin{minipage}[b]{.45\textwidth}
\centering
\includegraphics[width=1.0\linewidth]{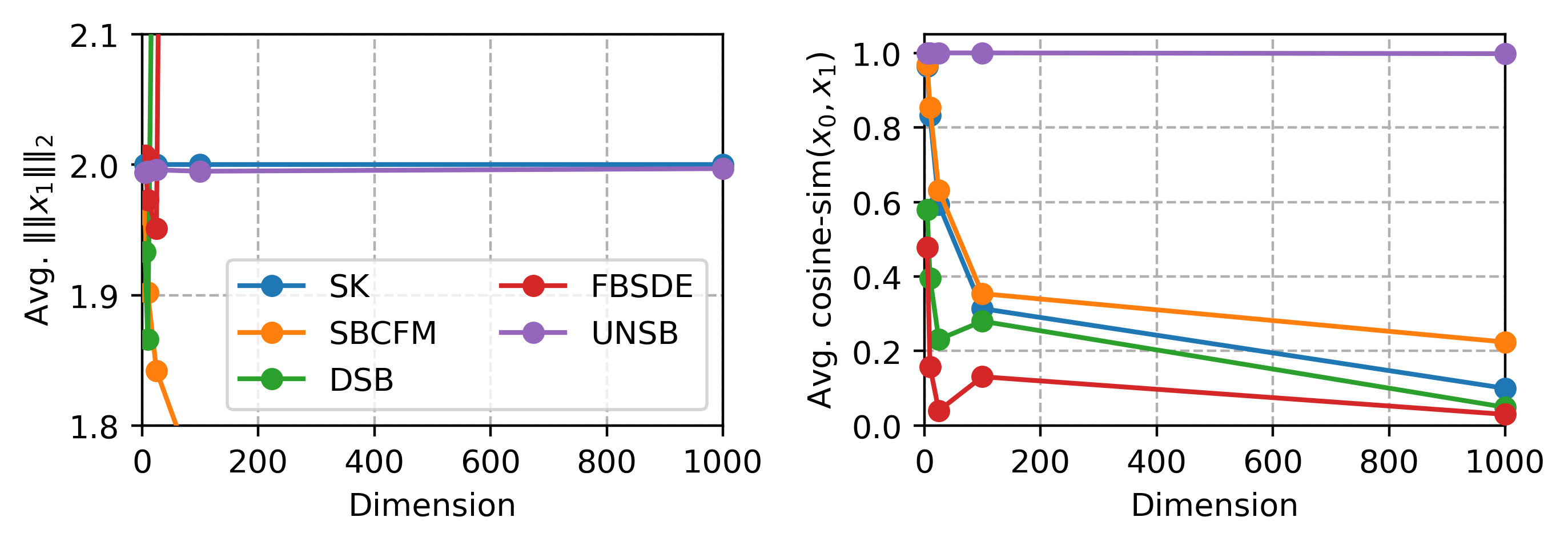}
\captionof{figure}{Results on two shells.}
\label{fig:cod_unsb}
\end{minipage}
\vspace{0.25cm} \\
\begin{minipage}[b]{1.0\linewidth}
\centering
\resizebox{0.55\columnwidth}{!}{%
\begin{tabular}{ccc}
\toprule
Method & $\bm{\mu}$ MSE & $\bm{\Sigma}$ MSE \\
\cmidrule{1-3}
SK & 1.5e-5 & 3.1e-6 \\
SBCFM & 4.7e-5 & 2.3e-5 \\
DSB & 4.028 & 0.139 \\
SB-FBSDE & 2.974 & 2.5e-9 \\
UNSB & 0.008 & 6.4e-7 \\
\bottomrule
\end{tabular}}
\caption{Results on two Gaussians.}
\label{table:gaussians}
\end{minipage}
\vspace{-1cm}
\end{wraptable}
Only $1k$ samples from each $\pi_0$, $\pi_1$ are used throughout the training. In Figure \ref{fig:cod_unsb}, we see all baselines either fail to transport samples to the target data manifold or fail to maintain high cosine similarity between input and outputs. On the other hand, UNSB is robust to dimension.

\textbf{Two Gaussians.} We also verify whether UNSB can learn the SB between two Gaussians. The SB between Gaussian distributions is also Gaussian, and its mean and covariance can be derived in a closed form \citep{bunne2023}. Thus, we may measure the error between approximated and true solutions exactly. We consider learning the SB between $\NN(-\mathbf{1},\bm{I})$ and $\NN(\mathbf{1},\bm{I})$ in a 50-dimensional space. In Table \ref{table:gaussians}, we see that UNSB recovers the mean and covariance of the ground-truth SB relatively accurately.

\subsection{Relation to Other Unpaired Translation Methods}

\textbf{UNSB vs. GANs.} $N=1$ version of UNSB is nearly equivalent to GAN-based translation methods such as CUT, i.e., UNSB may be interpreted as a multi-step generalization of GAN methods. Multi-step models are able to fit complex mappings that GANs are unable to by breaking down complex maps into a composition of simple maps. That is why diffusion models often achieve better performance than GANs \citep{salmona2022}. Similarly, we believe the multi-step nature of UNSB arising from SB allow it to obtain better performance than GAN-based translation methods.

\textbf{UNSB vs. previous diffusion methods.} Previous diffusion-based methods such as SDEdit \citep{meng2022} also translate images along SDEs. However, the SDEs used by such methods (e.g., VP-SDE or VE-SDE) may be suboptimal in terms of transport cost, resulting in large NFEs. On the other hand, SB allows fast generation, as it learns the shortest path between domains. Furthermore, UNSB uses adversarial learning, so it may generate feasible samples even with NFE $ = 1$. On the contrary, diffusion methods do not perform so well on complex tasks such as Horse2Zebra.

\section{Results for Unpaired Image-to-Image Translation
} \label{sec:I2I}

In this section, we provide experimental results of UNSB for large scale unpaired image-to-image translation tasks.

\textbf{UNSB settings.} We use the Markovian discriminator in $\LL_{\Adv}$ and the patch-wise contrastive matching loss \citep{cut} in $\LL_{\Reg}$. We set the number of timesteps as $N=5$, so we discretize the unit interval into $t_0,t_1,\ldots,t_5$. We set $\lambda_{\SB,t_i} = \lambda_{\Reg,t_i} = 1$ and $\tau = 0.01$. We measure the sample quality of $\bm{x}_1(\bm{x}_{t_i})$ for each $i = 0, \ldots, N-1$. So, NFE $=i$ denotes sample evaluation or visualization of $\bm{x}_1(\bm{x}_{t_{i-1}})$. Other details are deferred to the Appendix.

\textbf{Evaluation.} We use four benchmark datasets: Horse2Zebra, Map2Cityscape, Summer2Winter, and Map2Satellite. All images are resized into 256$\times$256. We use the FID score \citep{fid} and the KID score \citep{nicegan} to measure sample quality.

\begin{table}[t]
\begin{center}  
\resizebox{0.9\textwidth}{!}{
\begin{tabular}{@{\extracolsep{5pt}}ccccccccccc@{}}
\toprule
\multirow{2}{*}{\textbf{Method}} & \multirow{2}{*}{\textbf{NFE}} & \multirow{2}{*}{\textbf{Time}} & \multicolumn{2}{c}{\textbf{Horse2Zebra}} & \multicolumn{2}{c}{\textbf{Summer2Winter}}&\multicolumn{2}{c}{\textbf{Label2Cityscape}}&\multicolumn{2}{c}{\textbf{Map2Satellite}}\\
\cmidrule(lr){4-5} 
\cmidrule(lr){6-7}
\cmidrule(lr){8-9} 
\cmidrule(lr){10-11}
& & & FID$\downarrow$& KID$\downarrow$& FID$\downarrow$& KID$\downarrow$& FID$\downarrow$& KID$\downarrow$& FID$\downarrow$& KID$\downarrow$ \\
\cmidrule{1-11}
NOT & 1 & 0.006 & 104.3 & 5.012 & 185.5 & 8.732 & 221.3 & 19.76 & 224.9 & 16.59 \\
CycleGAN & 1 & 0.004 & 77.2 & 1.957 & 84.9 & 1.022 & 76.3 & 3.532 & 54.6 & 3.430 \\
MUNIT & 1 & 0.011 & 133.8 & 3.790 & 115.4 & 4.901 & 91.4 & 6.401 & 181.7 & 12.03 \\
Distance & 1 & 0.009 & 72.0 & 1.856 & 97.2 & 2.843 & 81.8 & 4.410 & 98.1 & 5.789 \\
GcGAN & 1 & 0.0027 & 86.7 & 2.051 & 97.5 & 2.755 & 105.2 & 6.824 & 79.4 & 5.153 \\
CUT & 1 & 0.0033 & 45.5 & 0.541 & 84.3 & 1.207 & 56.4 & 1.611 & 56.1 & 3.301 \\
SDEdit & 30 & 1.98 & 97.3 & 4.082 & 118.6 & 3.218 & -- & -- & -- & -- \\
P2P & 50 & 120 & 60.9 & 1.093 & 99.1 & 2.626 & -- & -- & -- & -- \\
\cmidrule{1-11}
Ours-best & 5 & 0.045 & \textbf{35.7} & 0.587 & \textbf{73.9} & \textbf{0.421} & \textbf{53.2} & \textbf{1.191} & \textbf{47.6} & \textbf{2.013} \\
\bottomrule
\end{tabular}}
\end{center}
\caption{NFE and generation time (sec.) per image and FID and KID$\times$100 of generated samples.}
\label{table:main}
\vspace{-0.3cm}
\end{table}

\begin{figure}[t]
\centering
\begin{subfigure}{1.0\linewidth}
\centerline{\includegraphics[width=0.9\linewidth]{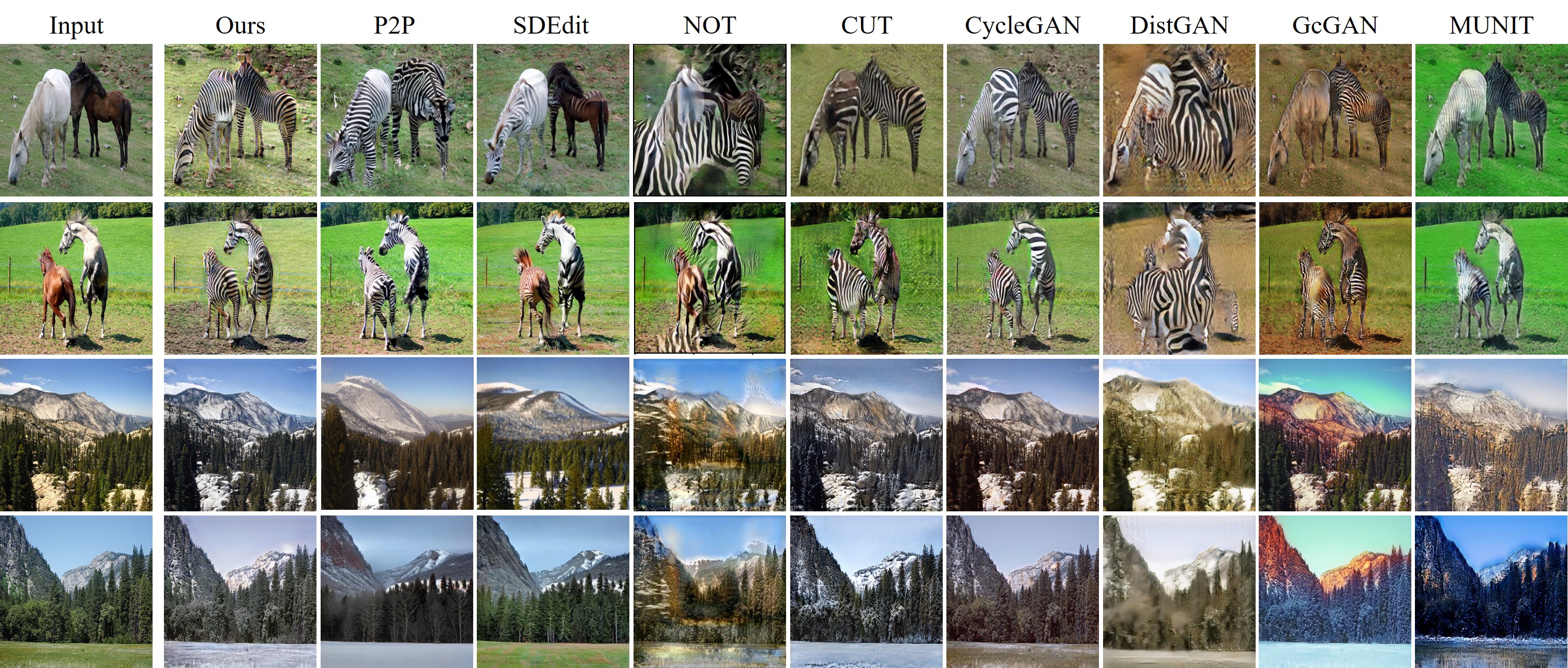}}
\caption{Comparison on natural domain data.}
\label{fig:UNSB_result_natural}
\end{subfigure}
% \vspace{0.0cm} \\
\begin{subfigure}{1.0\linewidth}
\centering
\includegraphics[width=0.8\linewidth]{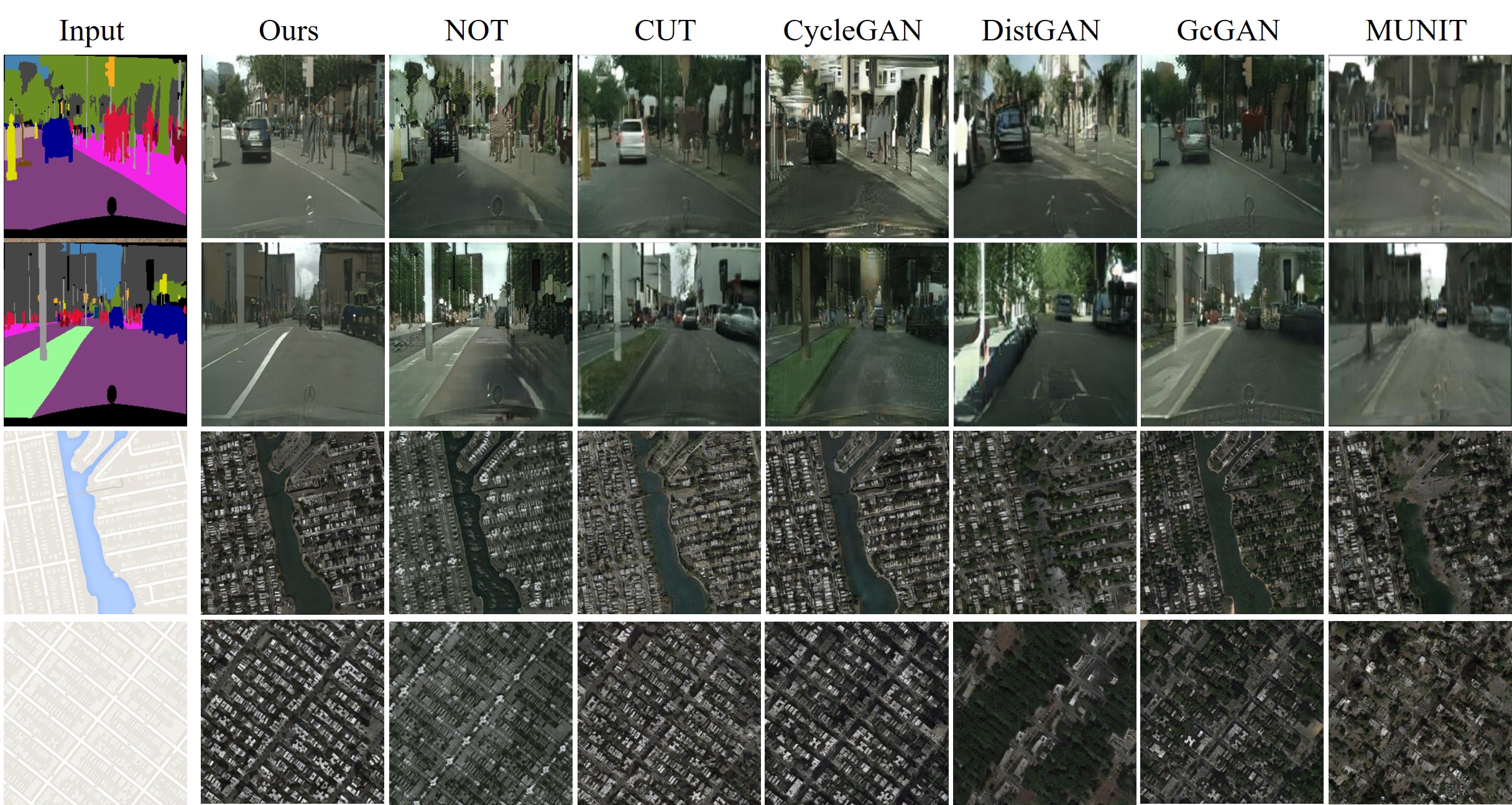}
\caption{Comparison on artificial domain data.}
\end{subfigure}
\caption{Qualitative comparison of image-to-image translation results from our UNSB and baseline I2I methods. Compared to other one-step baseline methods, our model generates more realistic domain-changed outputs while preserving the structural information of the source images.}
\label{fig:UNSB_result}
\vspace{-0.5cm}
\end{figure}

\textbf{Baselines.} We note all SB methods SBCFM, DSB, and SB-FBSDE do not provide results on $\geq 256$ resolution images in their paper due to scalability issues. While we were able to make SBCFM work on $256$ resolution images, the results were poor, so we deferred its results to the Appendix. Instead, we used Neural Optimal Transport (NOT) \citep{not} as a representative for OT. For GAN methods, we used CycleGAN \citep{cyclegan}, MUNIT \citep{munit}, DistanceGAN \citep{distancegan}, GcGAN \citep{gcgan}, and CUT \citep{cut}. For diffusion, we used SDEdit \citep{meng2022} and P2P \citep{hertz2022} with LDM \citep{rombach2022}. Since LDM is trained on natural images, we can use LDM on Horse2Zebra and Summer2Winter, but not on Label2Cityscape and Map2Satellite. Hence, we omit diffusion results on those two tasks.

\textbf{Comparison results.} We show quantitative results in Table \ref{table:main}, where we observe our model outperforms baseline methods in all datasets.
In particular, out model largely outperformed early GAN-based translation methods. When compared to recent models such as CUT, our model still shows better scores. NOT suffers from degraded performance in all of the datasets.

Qualitative comparison in Figure \ref{fig:UNSB_result} provides insight into the superior performance of UNSB. Our model successfully translates source images to the target domain while preventing structural inconsistency. For other GAN-based methods, the model outputs do not properly reflect the target domain information, and some models fail to preserve source image structure. For NOT, we observe that the model failed to overcome the curse of dimensionality. Specifically, NOT hallucinates structures not present in the source image (for instance, see the first row of Figure \ref{fig:UNSB_result}), leading to poor samples. For diffusion-based methods, we see they fail to fully reflect the target domain style.

\begin{wraptable}{t}{0.45\linewidth}
\begin{minipage}[b]{.4\textwidth}
\centering
\includegraphics[width=0.6\linewidth]{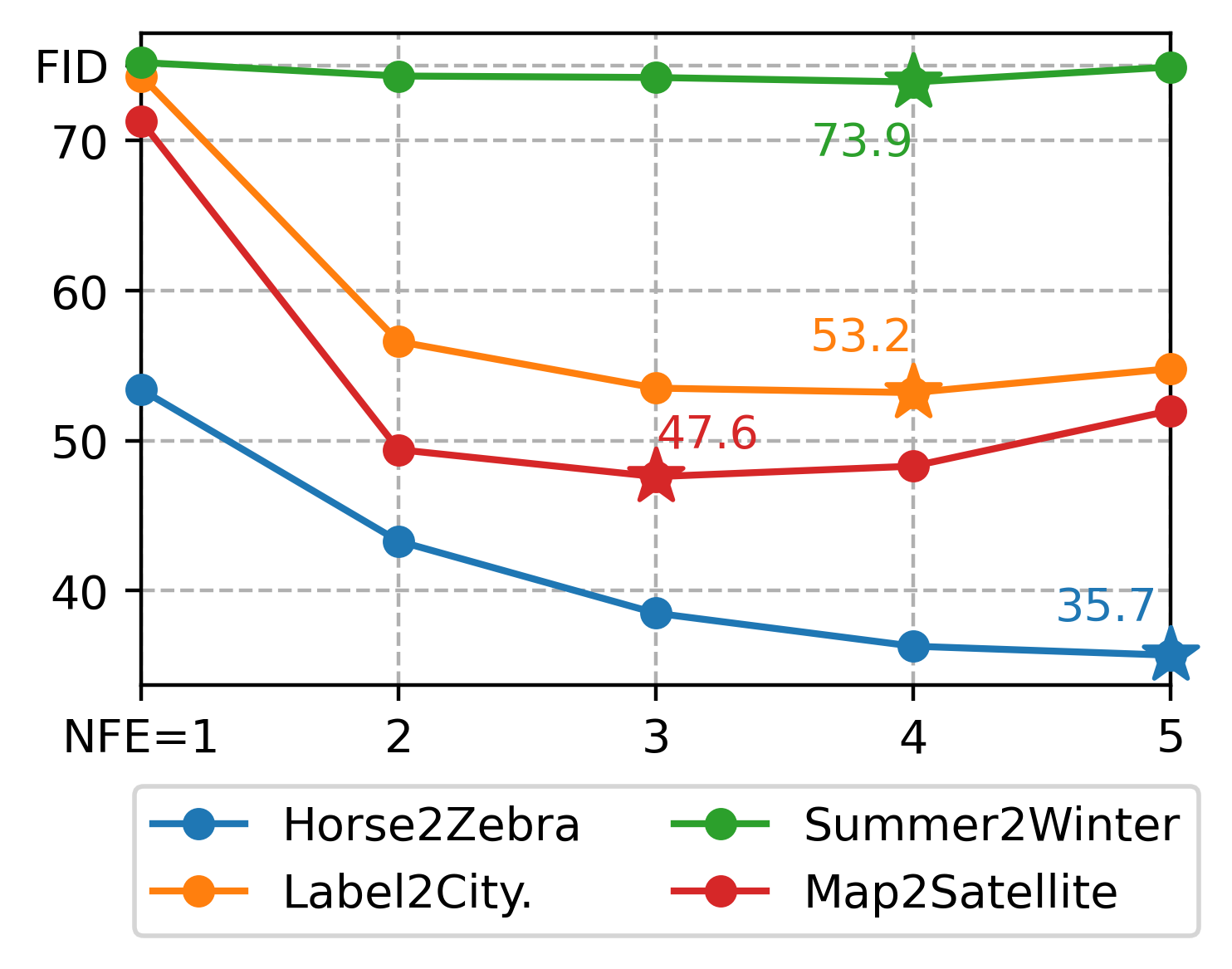} \\
\vspace{0.3cm}
\includegraphics[width=0.7\linewidth]{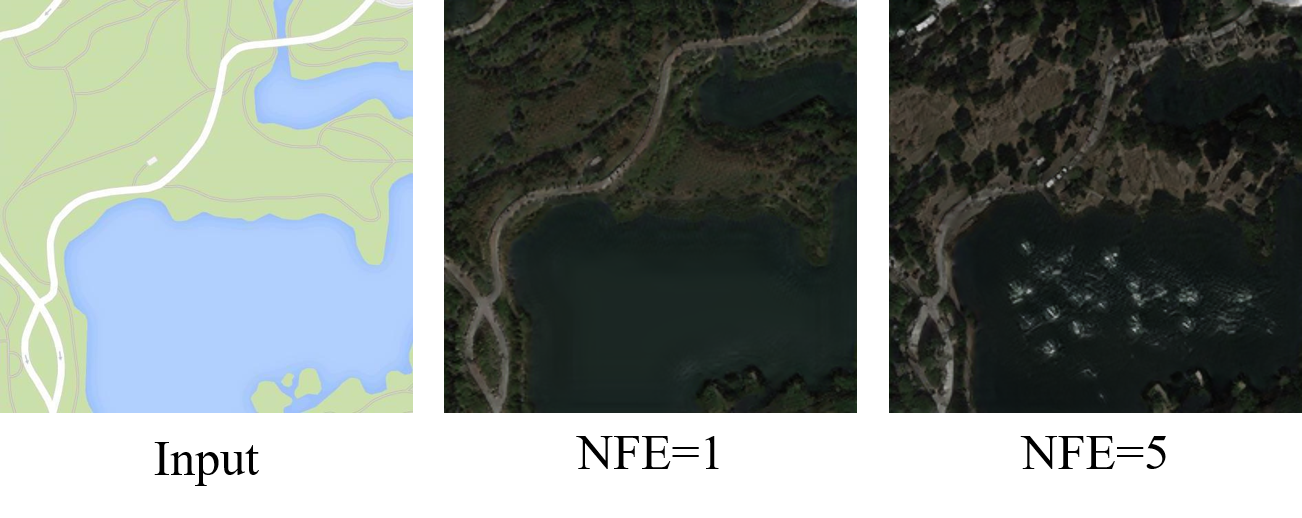}
\captionof{figure}{NFE analysis and failure case.}
\label{fig:nfe}
\end{minipage}
\vspace{0.3cm} \\
\begin{minipage}[b]{.4\textwidth}
\begin{center}
\resizebox{0.7\textwidth}{!}{
\begin{tabular}{ccccc}
\toprule
Disc. & Reg. & NFE & FID & KID \\
\cmidrule{1-5}
\xmark   & \xmark & 5 & 230 & 18.8 \\
Instance & \xmark & 1 & 104 & 5.0 \\
Patch    & \xmark & 1 & 66.3 & 1.6 \\
Patch    & \xmark & 5 & 58.9 & 1.5 \\
Patch    & \cmark & 5 & 35.7 & 0.587 \\
\bottomrule
\end{tabular}}
\end{center}
\caption{Ablation study.}
\label{table:ablation}
\end{minipage}
\vspace{0.3cm} \\
\begin{minipage}[b]{.4\textwidth}
\centering
\includegraphics[width=0.6\linewidth]{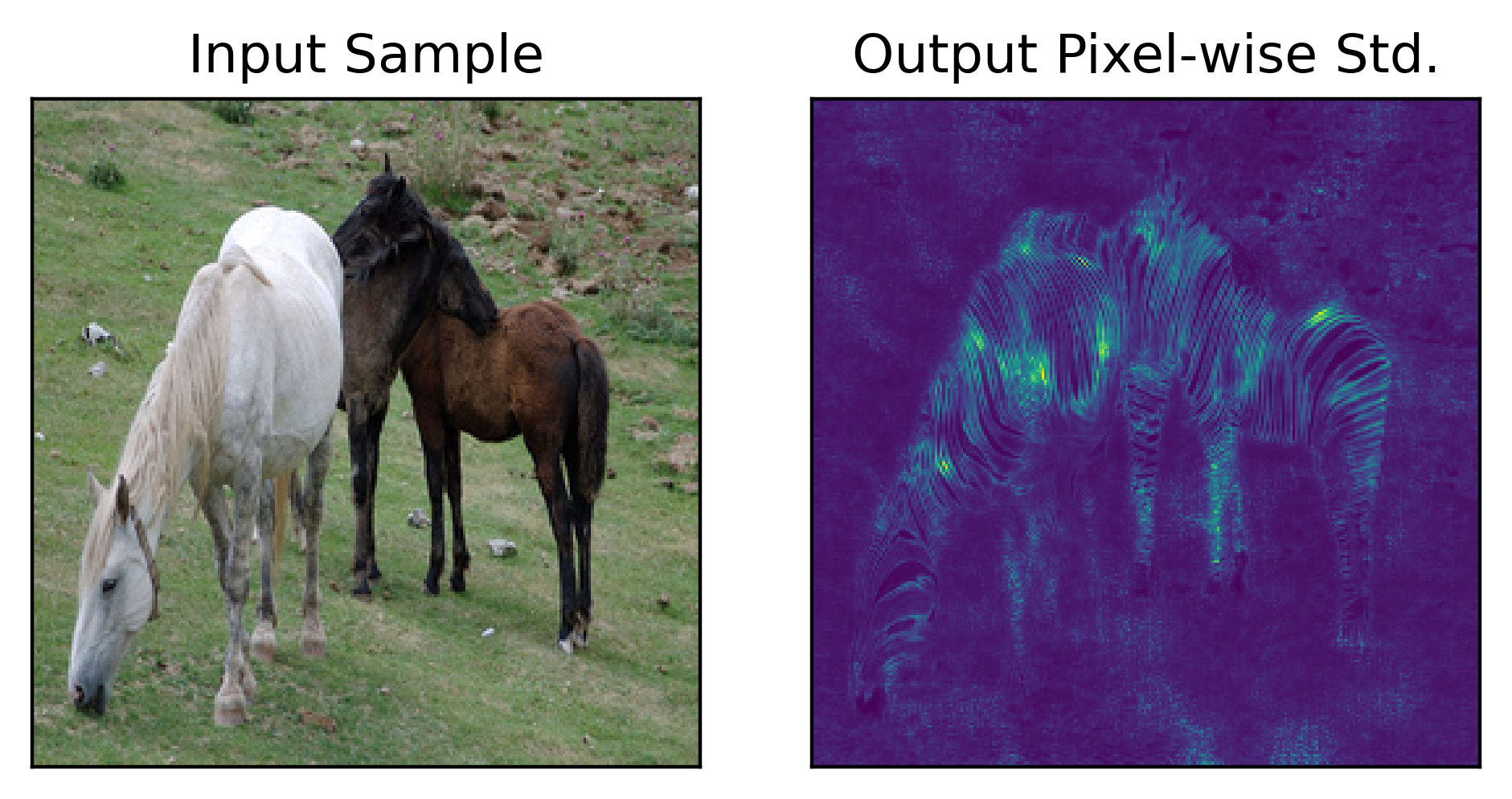}
\captionof{figure}{UNSB output variation.}
\label{fig:stoc}
\end{minipage}
\vspace{0.3cm} \\
\begin{minipage}[b]{.4\textwidth}
\centering
\includegraphics[width=0.6\linewidth]{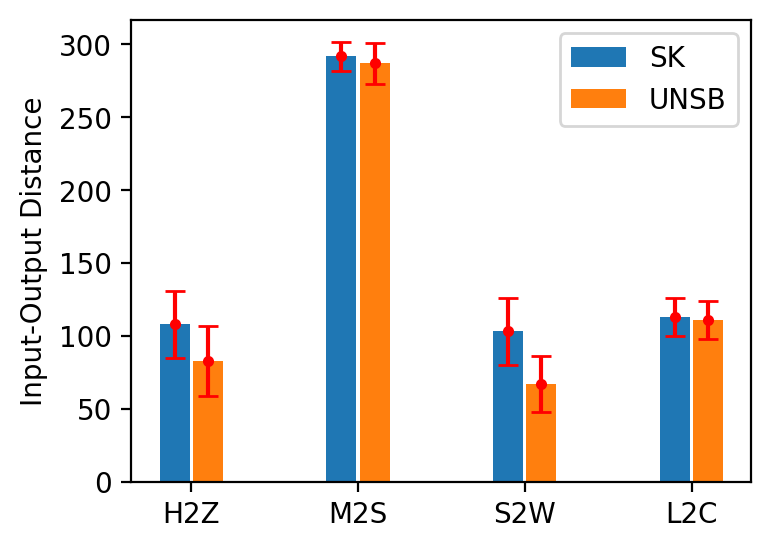}
\captionof{figure}{Input-output distance.}
\label{fig:cost}
\end{minipage}
\vspace{-1cm}
\end{wraptable}

\textbf{NFE analysis.} We investigate the relationship between NFE and the translated sample quality. In Figure \ref{fig:nfe} top, we observe that using NFE $ = 1$ resulted in relatively poor performance, sometimes even worse than existing one-step GAN methods. However, we see that increasing the NFE consistently improved generation quality: best FIDs were achieved for NFE values between 3 and 5 for all datasets. This trend is also evident in our qualitative comparisons. Looking at Figure \ref{fig:UNSB_main}, we observe that as NFE increases, there are gradual improvements in image quality. However, UNSB at times suffers from the problem where artifacts occur with large NFE, as shown in Figure \ref{fig:nfe} bottom. We speculate this causes increasing FID for large NFEs on datasets such as Map2Satellite.

\textbf{Ablation study.} Unlike GAN-based methods, UNSB is able to generate samples in multiple steps. Also unlike SB methods, UNSB is able to use advanced discriminators and regularization. To understand the influence of those factors on the final performance, we performed an ablation study on the Horse2Zebra dataset. We report the results in Table \ref{table:ablation}. In the case where we use neither advanced discriminator nor regularization (corresponding to the previous SB method SBCFM), we see poor results. Then, as we add multi-step generation, advanced discriminator (Markovian discriminator, denoted ``Patch''), and regularization, the result gradually improves until we obtain the best performance given by UNSB. This implies that the three components of UNSB play orthogonal roles.
% in unpaired image-to-image translation.

\textbf{Stochasticity analysis.} SB is stochastic and should return diverse outputs. In Figure \ref{fig:stoc}, we see meaningful variation in the generated images. Hence, UNSB indeed learns a stochastic map.

\textbf{Transport cost analysis.} SB solves OT, so inputs and outputs must be close. In Figure \ref{fig:cost}, we compare input-output distance for UNSB pairs and SK pairs computed between actual dataset images. we observe that input-output distance for UNSB is smaller than those of SK. This implies UNSB successfully generalizes beyond observed points to generate pairs which are close under $L_2$ norm and are realistic.

\section{Conclusion}

In this work, we proposed Unpaired Neural Schr\"{o}dinger Bridge (UNSB) which solves the Schr\"{o}dinger Bridge problem (SBP) via adversarial learning. UNSB formulation of SB allowed us to combine SB with GAN training techniques for unpaired image-to-image translation. We demonstrated the scalability and effectiveness of UNSB through various data-to-data or image-to-image translation tasks. In particular, while all previous methods for SB or OT fail, UNSB achieved results that often surpass those of one-step models. Overall, our work opens up a previously unexplored research direction for applying diffusion models to unpaired image translation.

\newpage

\section*{Ethics and Reproducibility Statements}

\textbf{Ethics statement.} UNSB extends diffusion to translation between two arbitrary distributions, allowing us to explore a wider range of applications. In particular, UNSB may be used in areas with beneficial impacts, such as medical image restoration. However, UNSB may also be used to create malicious content such as fake news, and this must be prevented through proper regulation.

\textbf{Reproducibility statement.} Pseudo-codes and hyper-parameters are described in the main paper and the Appendix.

\section*{Acknowledgments}
This research was supported by the National Research Foundation of Korea (NRF) (RS-202300262527), Field-oriented Technology Development Project for Customs Administration funded by the Korean government (the Ministry of Science \& ICT and the Korea Customs Service) through the National Research Foundation (NRF) of Korea under Grant NRF2021M3I1A1097910 \& NRF2021M3I1A1097938, Korea Medical Device Development Fund
grant funded by the Korea government (the Ministry of Science and ICT, the Ministry of Trade, Industry, and Energy, the Ministry of Health \& Welfare, the Ministry of Food and Drug Safety) (Project
Number: 1711137899, KMDF PR 20200901 0015), and Culture, Sports, and Tourism R\&D Program through the Korea Creative Content Agency grant funded by the Ministry of Culture, Sports
and Tourism in 2023.

\bibliographystyle{iclr2024_conference}

\newpage

\appendix

\section{Proofs} \label{append:proofs}

\begin{lemma}[Self-similarity] \label{lemma:1}
Let $[t_a,t_b] \subseteq [0,1]$ and $\{\bm{x}_t\} \sim \QQ^{\SB}$. The SB restricted to $[t_a,t_b]$, defined as the distribution of $\{\bm{x}_t\}|_{[t_a,t_b]} \coloneqq \{\bm{x}_{t(s)} : s \in [0,1]\}$ where $t(s) \coloneqq t_a + (t_b - t_a) s$
solves
\begin{align}
\min_{\QQ \in \mathcal{P}(\Omega)} \KL(\QQ \| \WW^{\tau (t_b - t_a)}) \quad \text{s.t.} \quad \QQ_0 = \QQ^{\SB}_{t_a}, \ \ \QQ_1 = \QQ^{\SB}_{t_b}. \label{aeq:rsbp}
\end{align}
\end{lemma}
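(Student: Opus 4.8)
The plan is to derive self-similarity from the stochastic control formulation \eqref{eq:scf}, using the Markov (feedback) structure of the Schr\"{o}dinger bridge together with Brownian scaling.

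First I would write $\{\bm{x}_t\} \sim \QQ^{\SB}$ as the solution of the SDE \eqref{eq:sde} with $\bm{u}^{\SB}$ the minimum-energy feedback drift of \eqref{eq:scf}, and reparametrize the restricted trajectory by $t(s) := t_a + (t_b - t_a)s$ and $\bm{y}_s := \bm{x}_{t(s)}$. Since $\tilde{\bm{w}}_s := (t_b - t_a)^{-1/2}(\bm{w}_{t(s)} - \bm{w}_{t_a})$ is again a standard Brownian motion in $s$, substitution yields
\begin{align}
d\bm{y}_s = \bm{v}_s\,ds + \sqrt{\tau(t_b - t_a)}\,d\tilde{\bm{w}}_s, \qquad \bm{v}_s := (t_b - t_a)\,\bm{u}_{t(s)}^{\SB},
\end{align}
with $\bm{y}_0 \sim \QQ^{\SB}_{t_a}$ and $\bm{y}_1 \sim \QQ^{\SB}_{t_b}$, so $\{\bm{y}_s\}$ is admissible for the control problem \eqref{eq:scf} with reference $\WW^{\tau(t_b - t_a)}$ and endpoints $\QQ^{\SB}_{t_a},\QQ^{\SB}_{t_b}$. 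The change of variables $r = t(s)$ then turns the control cost into $(t_b - t_a)\,\EE\big[\int_{t_a}^{t_b}\tfrac12\|\bm{u}_r^{\SB}\|^2\,dr\big]$, so minimizing the energy of $\{\bm{y}_s\}$ over $[0,1]$ is, up to the positive factor $t_b - t_a$, the same as minimizing $\EE\big[\int_{t_a}^{t_b}\tfrac12\|\bm{u}_r\|^2\,dr\big]$ over drifts on $[t_a,t_b]$ that steer $\QQ^{\SB}_{t_a}$ to $\QQ^{\SB}_{t_b}$.

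Next I would show that $\bm{u}^{\SB}$ restricted to $[t_a,t_b]$ is such a minimizer, via a ``principle of optimality'' argument. Suppose not: let $\tilde{\bm{u}}$ be a drift on $[t_a,t_b]$ with the same endpoint marginals but strictly smaller energy, and glue it into $\bm{u}^{\SB}$, i.e. set $\hat{\bm{u}} = \bm{u}^{\SB}$ on $[0,t_a]\cup[t_b,1]$ and $\hat{\bm{u}} = \tilde{\bm{u}}$ on $[t_a,t_b]$. Because $\bm{u}^{\SB}$ is feedback: running it on $[0,t_a]$ from $\pi_0$ still yields marginal $\QQ^{\SB}_{t_a}$ at $t_a$; $\tilde{\bm{u}}$ then yields $\QQ^{\SB}_{t_b}$ at $t_b$; and running $\bm{u}^{\SB}$ on $[t_b,1]$ from marginal $\QQ^{\SB}_{t_b}$ reproduces both $\pi_1$ at time $1$ and the same law of $\{\bm{x}_t\}_{t\in[t_b,1]}$ as under $\QQ^{\SB}$. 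Hence $\hat{\bm{u}}$ is admissible for \eqref{eq:scf}, and additivity of the cost over the three disjoint subintervals gives it a strictly smaller energy than $\bm{u}^{\SB}$, contradicting optimality of $\bm{u}^{\SB}$. Thus $\bm{v}$ (equivalently $\bm{u}^{\SB}|_{[t_a,t_b]}$) is the minimum-energy drift of the rescaled problem, and the stochastic control formulation read in reverse --- together with uniqueness of the SBP solution under the assumed regularity --- identifies $\{\bm{x}_t\}|_{[t_a,t_b]}$ as the Schr\"{o}dinger bridge solving \eqref{aeq:rsbp}.

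I expect the gluing step to be the crux: one must justify that the SB drift restricted to the middle interval is optimal for the sub-problem with its own intermediate marginals, which is precisely a dynamic-programming statement and hinges on (i) the optimal drift being in feedback form, so the outer segments can be re-attached without altering their marginals or laws, and (ii) the additive-over-disjoint-intervals structure of the quadratic cost. The remaining ingredients --- Brownian scaling, the change of variables, the behaviour of the reference Wiener measure under time reparametrization, and the standard equivalence and uniqueness for the SBP under the regularity conditions of \eqref{eq:scf} --- are routine bookkeeping.
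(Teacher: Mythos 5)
Your proposal is correct and follows essentially the same route as the paper's proof: restrict the optimal control $\bm{u}^{\SB}$ to $[t_a,t_b]$, rule out a cheaper drift on the sub-interval by gluing it into the full trajectory and contradicting optimality in \eqref{eq:scf}, then rescale time to identify the reference measure as $\WW^{\tau(t_b-t_a)}$. Your version is somewhat more careful than the paper's in making explicit the Brownian scaling computation and the feedback-form requirement that justifies re-attaching the outer segments, but these are refinements of the same argument rather than a different one.
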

\begin{proof}
We claim $\bm{u}_t^{\SB}$ restricted to the interval $[t_a,t_b]$ is also a SB from $\QQ^{\SB}_{t_a}$ to $\QQ^{\SB}_{t_b}$. Suppose the claim is false, so there is another drift $\hat{\bm{u}}_t$ on $[t_a,t_b]$ such that
\begin{align*}
\EE \left[ \int_{t_a}^{t_b} \|\hat{\bm{u}}_t\|^2 \, dt \right] < \EE \left[ \int_{t_a}^{t_b} \|\bm{u}_t^{\SB}\|^2 \, dt \right] \quad \text{and} \quad
\begin{cases}
d\bm{x}_t = \hat{\bm{u}}_t \, dt + \sqrt{\tau} \, d\bm{w}_t, \\
\bm{x}_{t_a} \sim \QQ^{\SB}_{t_a}, \ \bm{x}_{t_b} \sim \QQ^{\SB}_{t_b}.
\end{cases}
\end{align*}
We can extend $\hat{\bm{u}}_t$ to the entire interval $[0,1]$ by defining
\begin{align*}
\hat{\bm{u}}_t =
\begin{cases}
\bm{u}^{\SB}_t & \text{if } 0 \leq t < t_a, \\
\hat{\bm{u}}_t & \text{if } t_a \leq t < t_b, \\
\bm{u}^{\SB}_t & \text{if } t_b \leq t < 1.
\end{cases}
\end{align*}
We then have
\begin{align*}
\EE \left[ \int_0^1 \|\hat{\bm{u}}_t\|^2 \, dt \right] < \EE \left[ \int_0^1 \|\bm{u}_t^{\SB}\|^2 \, dt \right] \quad \text{and} \quad
\begin{cases}
d\bm{x}_t = \hat{\bm{u}}_t \, dt + \sqrt{\tau} \, d\bm{w}_t, \\
\bm{x}_0 \sim \pi_0, \ \bm{x}_1 \sim \pi_1,
\end{cases}
\end{align*}
which contradicts our assumption that $\bm{u}_t^{\SB}$ solves \eqref{eq:scf}. We note that by change of time variable,
\begin{align*}
d\bm{x}_t = \bm{u}^{\SB}_t \, dt + \sqrt{\tau} \, d\bm{w}_t, \quad t_a \leq t \leq t_b
\end{align*}
is equivalent to the SDE
\begin{align*}
d\bm{x}_{s} = (t_b-t_a) \bm{u}_{t(s)}^{\SB} \, ds + \sqrt{\tau (t_b-t_a)} \, d\bm{w}_{s}, \quad 0 \leq s \leq 1
\end{align*}
% By the definition of the white noise process\footnote{$\bm{w}_t$ and $\bm{w}_{t'}$ are independent if $t \neq t'$, and the mapping $t \mapsto \bm{w}_t$ is a Gaussian process with zero mean and Dirac delta correlation.}, $\bm{w}_{t(s)}$ is also a white noise process. It follows that we may replace $\bm{w}_{t(s)}$ with $\bm{w}_t$ to obtain an equivalent SDE
% \begin{align*}
% d\bm{x}_t = (t_b-t_a) \bm{u}_{s(t)}^{\SB} \, dt + \sqrt{(t_b-t_a) \tau} \, d\bm{w}_t, \quad 0 \leq t \leq 1
% \end{align*}
By comparing the stochastic control formulation \eqref{eq:scf} with the original SBP \eqref{eq:sbp}, we see that this means the reference Wiener measure for the SBP restricted to $[t_a,t_b]$ has variance $\tau (t_b-t_a)$.
\end{proof}

\begin{lemma}[Static formulation of restricted SBs] \label{lemma:2}
Let $t \in [t_a,t_b] \subseteq [0,1]$ and $\{\bm{x}_t\} \sim \QQ^{\SB}$. Then
\begin{align}
p(\bm{x}_t | \bm{x}_{t_a}, \bm{x}_{t_b}) = \NN(\bm{x}_t | s(t) \bm{x}_{t_b} + (1 - s(t)) \bm{x}_{t_a}, s(t)(1 - s(t)) \tau (t_b - t_a) \bm{I}) \label{aeq:rsb-inter}
\end{align}
where $s(t) \coloneqq (t - t_a)/(t_b - t_a)$ is the inverse function of $t(s)$. Moreover,
\begin{align}
\QQ^{\SB}_{t_a t_b} = \argmin_{\gamma \in \Pi(\QQ_{t_a}, \QQ_{t_b})} \EE_{(\bm{x}_{t_a}, \bm{x}_{t_b}) \sim \gamma} [\| \bm{x}_{t_a} - \bm{x}_{t_b} \|^2] - 2 \tau (t_b - t_a) H(\gamma). \label{aeq:rsb-eot}
\end{align}
\end{lemma}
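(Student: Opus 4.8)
The plan is to obtain both assertions by transporting the already-known static formulation of the SBP --- Eq.~(\ref{eq:gaus}) and Eq.~(\ref{eq:eot}) --- onto the \emph{restricted} bridge supplied by Lemma~\ref{lemma:1}. Lemma~\ref{lemma:1} says precisely that the time-reparametrized process $\{\bm{x}_{t(s)} : s \in [0,1]\}$, with $t(s) = t_a + (t_b - t_a)s$, is itself the Schr\"odinger bridge between the marginals $\QQ^{\SB}_{t_a}$ and $\QQ^{\SB}_{t_b}$, now with respect to the Wiener reference measure $\WW^{\tau(t_b - t_a)}$. Since the static formulation is a structural property of \emph{any} SBP instance --- it nowhere uses that the endpoint marginals are $\pi_0,\pi_1$ or that the reference variance equals $\tau$ --- we may instantiate it with the endpoints $\QQ^{\SB}_{t_a}, \QQ^{\SB}_{t_b}$ and reference variance $\tau(t_b - t_a)$, modulo the same regularity assumptions already in force for the original bridge.

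For the first claim, Eq.~(\ref{eq:gaus}) applied to this restricted bridge, whose ``$[0,1]$-time'' is the rescaled variable $s$, reads
\[
p(\bm{x}_{t(s)} \mid \bm{x}_{t_a}, \bm{x}_{t_b}) = \NN(\bm{x}_{t(s)} \mid s\,\bm{x}_{t_b} + (1-s)\,\bm{x}_{t_a},\; s(1-s)\,\tau(t_b - t_a)\,\bm{I}).
\]
I would then substitute $s = s(t) = (t - t_a)/(t_b - t_a)$, the inverse of $t(s)$. Because this substitution only relabels the time index --- $\bm{x}_{t(s(t))} = \bm{x}_t$ --- and conditioning on the two endpoints of the restricted bridge is the same event as conditioning on $(\bm{x}_{t_a},\bm{x}_{t_b})$, no Jacobian enters, and we land on exactly Eq.~(\ref{aeq:rsb-inter}).

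For the second claim, Eq.~(\ref{eq:eot}) applied to the same restricted bridge identifies the joint law of its two endpoints, namely $\QQ^{\SB}_{t_a t_b}$, as the minimizer over $\gamma \in \Pi(\QQ_{t_a},\QQ_{t_b})$ of $\EE_{(\bm{x}_{t_a},\bm{x}_{t_b})\sim\gamma}[\|\bm{x}_{t_a} - \bm{x}_{t_b}\|^2] - 2\tau(t_b - t_a)H(\gamma)$, which is Eq.~(\ref{aeq:rsb-eot}) verbatim. Here the quadratic coupling cost is untouched by the time reparametrization (it sees only the two endpoints), while the entropy weight $2\tau(t_b - t_a)$ is inherited from the reference variance pinned down in Lemma~\ref{lemma:1}.

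The step I expect to require the most care --- though it is bookkeeping rather than a genuine obstacle --- is threading the rescaling $t \leftrightarrow s$ through the Gaussian parameters so that the variance picks up the single factor $\tau(t_b - t_a)$ (and not $\tau$ or $\tau(t_b-t_a)^2$), together with the sanity check that Lemma~\ref{lemma:1} really presents the restricted process as a solution of a bona fide SBP against a Wiener reference, so that Eqs.~(\ref{eq:gaus}) and (\ref{eq:eot}), quoted earlier only for bridges between $\pi_0$ and $\pi_1$, transfer to it unchanged. Once those points are settled, both displays follow by direct substitution.
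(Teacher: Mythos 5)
Your proposal is correct and is essentially the paper's own argument: the paper's proof is the one-line instruction to translate the restricted SBP of Lemma~\ref{lemma:1} into its static formulation with the reduced reference variance $\tau(t_b - t_a)$, which is exactly what you carry out, including the correct bookkeeping of the time rescaling and the entropy weight. Nothing is missing.
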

\begin{proof}
Translate the SBP \eqref{aeq:rsbp} into the static formulation, taking into account the reduced variance $\tau (t_b - t_a)$ of the reference Weiner measure $\WW^{\tau (t_b - t_a)}$.
\end{proof}

\begin{proof}[Proof of Theorem 1]
Let $t_a = t_i$ and $t_b = 1$ in Lemma \ref{lemma:2}. If the class of distributions expressed by $q_{\phi_i}(\bm{x}_1 | \bm{x}_{t_i})$ is sufficiently large, for any $\gamma \in \Pi(\QQ_{t_i},\QQ_1)$,
\begin{align}
d\gamma(\bm{x}_{t_i},\bm{x}_1) = q_{\phi_i}(\bm{x}_{t_i},\bm{x}_1)
\end{align}
for some $\phi_i$ which satisfies
\begin{align}
\KL(q_{\phi_i}(\bm{x}_1) \| p(\bm{x}_1)) = 0.
\end{align}
Thus, \eqref{aeq:rsb-eot} and \eqref{eq:optim} under the constraint \eqref{eq:KL} are equivalent optimization problems, which yield the same solutions, namely $p(\bm{x}_{t_i},\bm{x}_1)$. Then, with optimal $\phi_i$,
\begin{align}
q_{\phi_i}(\bm{x}_1 | \bm{x}_{t_i}) p(\bm{x}_{t_i}) = q_{\phi_i}(\bm{x}_{t_i},\bm{x}_1) = p(\bm{x}_{t_i},\bm{x}_1). \label{eq:23}
\end{align}
Together with the observation that \eqref{eq:rsb-inter} is identical to \eqref{aeq:rsb-inter}, \eqref{eq:23} implies \eqref{eq:solution}.
\end{proof}

\newpage

\section{Omitted Experiment Details} \label{append:details} 

\textbf{Training.} All experiments are conducted on a single RTX3090 GPU. On each dataset, we train our UNSB network for 400 epochs with batch size 1 and Adam optimizer with $\beta_1 = 0.5$, $\beta_2 = 0.999$, and initial learning rate $0.0002$. Learning rate is decayed linearly to zero until the end of training. All images are resized into $256 \times 256$ and normalized into range $[-1,1]$. For SB training and simulation, we discretize the unit interval $[0,1]$ into 5 intervals with uniform spacing. We used $\lambda_{\SB} = \lambda_{\Reg} = 1$ and $\tau = 0.01$. To estimate the entropy loss, we used mutual information neural estimation method \cite{belghazi2018}. To incorporate timestep embedding and stochastic conditioning into out UNSB network, we used positional embedding and AdaIN layers, respectively, following the implementation of DDGAN \cite{xiao2022}. For I2I tasks, we used CUT loss as regularization. On Summer2Winter translation task, we used a pre-trained VGG16 network as our feature selection source, following the strategy in the previous work by \cite{sesim}.

\textbf{Entropy approximation.} We first describe how to estimate the entropy of a general random variable. We observe that for a random variable $X$,
\begin{align}
I(X,X) = H(X) - \underbrace{H(X | X)}_{=0} = H(X)
\end{align}
so mutual information may be used to estimate the entropy of a random variable. Works such as \cite{belghazi2018} and \cite{lim2020} provide methods to estimate mutual information of two random variables by optimizing a neural network. For instance, \cite{belghazi2018} tells us that
\begin{align}
I_\Theta(X,Z) \coloneqq \sup_{\theta \in \Theta} \mathbb{E}_{\mathbb{P}_{XZ}}[T_\theta] - \log \left( \mathbb{E}_{\mathbb{P}_X \otimes \mathbb{P}_Z} [e^{T_\theta}] \right) \label{eq:MI}
\end{align}
where $T_\theta$ is a neural network parametrized by $\theta \in \Theta$ is capable of approximating $I(X,Z)$ up to arbitrary accuracy. Hence, we can estimate $H(X) = I(X,X)$ by setting $X = Z$ and optimizing \eqref{eq:MI}. In the UNSB objective \eqref{eq:optim}, we use \eqref{eq:MI} to estimate the entropy of $X = Z = (\bm{x}_1,\bm{x}_{t_i})$ where $(\bm{x}_{t_i},\bm{x}_1) \sim q_{\phi_i}(\bm{x}_{t_i},\bm{x}_1)$ for $q_{\phi_i}$ defined in \eqref{eq:backward}. In practice, we use the same neural net architecture for $T_\theta$ and the discriminator. We use the same network for $T_\theta$ for each time-step $t_i$ by incorporating time embedding into $T_\theta$. $T_\theta$ is updated once with gradient ascent every $q_{\phi_i}$ update, analogous to adversarial training.

\textbf{Evaluation.}
\begin{itemize}
\item Metric calculation codes
\begin{itemize}
    \item FID : \url{https://github.com/mseitzer/pytorch-fid}
    \item KID : \url{https://github.com/alpc91/NICE-GAN-pytorch}
\end{itemize}
\item Evaluation protocol: we follow standard procedure, as described in \cite{cut}.
\item KID measurement
\begin{itemize}
    \item DistanceGAN, MUNIT, NOT, SBCFM : trained from scratch using official code.
    \item Baselines excluding above four methods : KID measured using samples generated by models from official repository.
\end{itemize}
\item FID measurement
\begin{itemize}
    \item Horse2Zebra and Label2Cityscape : numbers taken from Table 1 in \cite{cut}.
    \item Summer2Winter and Map2Satellite : all baselines trained using official code.
\end{itemize}
\end{itemize}

\newpage

\begin{table}[h]
\begin{minipage}[b]{.45\textwidth}
\begin{center}  
\begin{tabular}{ccc}
\toprule
\textbf{Method} & FID $\downarrow$ & KID$\downarrow$ \\
\cmidrule{1-3}
SBCFM &229.5 & 18.8 \\
Ours &\textbf{35.7}& \textbf{0.587} \\
\bottomrule
\end{tabular} 
\end{center}
\caption{Quantitative comparison result with another SB-based method on Horse2Zebra. The baseline model performance is severely degraded, while our generated results show superior perceptual quality.}
\label{table:cfm}
\end{minipage}\hfill
\begin{minipage}[b]{0.53\linewidth}
\centering
\includegraphics[width=1.0\linewidth]{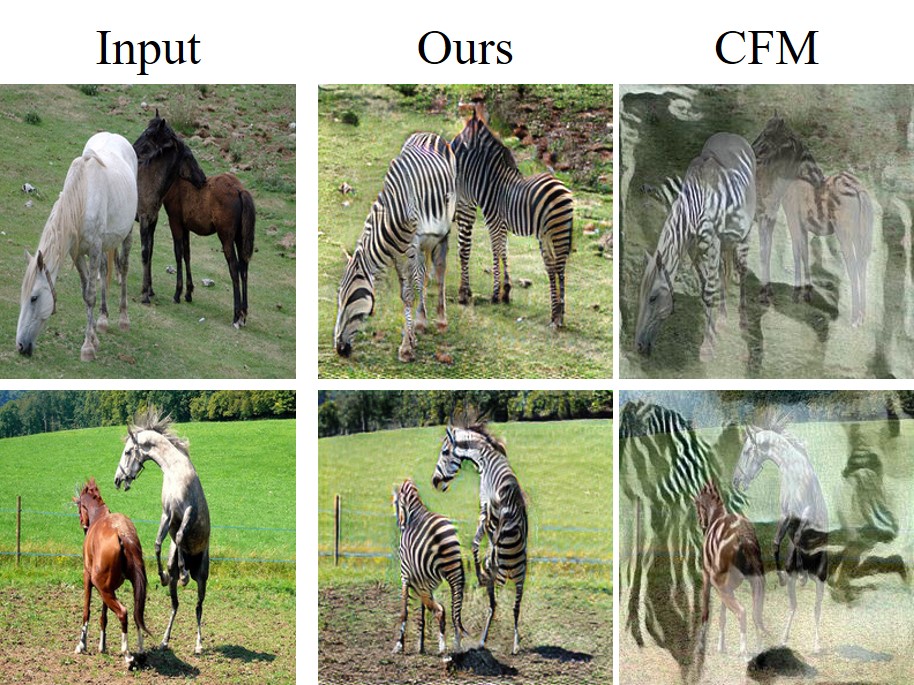}
\captionof{figure}{Visualization of our generated result and the result from conditional flow matching (CFM).}
\label{fig:cfm}
\end{minipage}
\end{table}

\begin{figure}[h]
\centering
\includegraphics[width=1.0\linewidth]{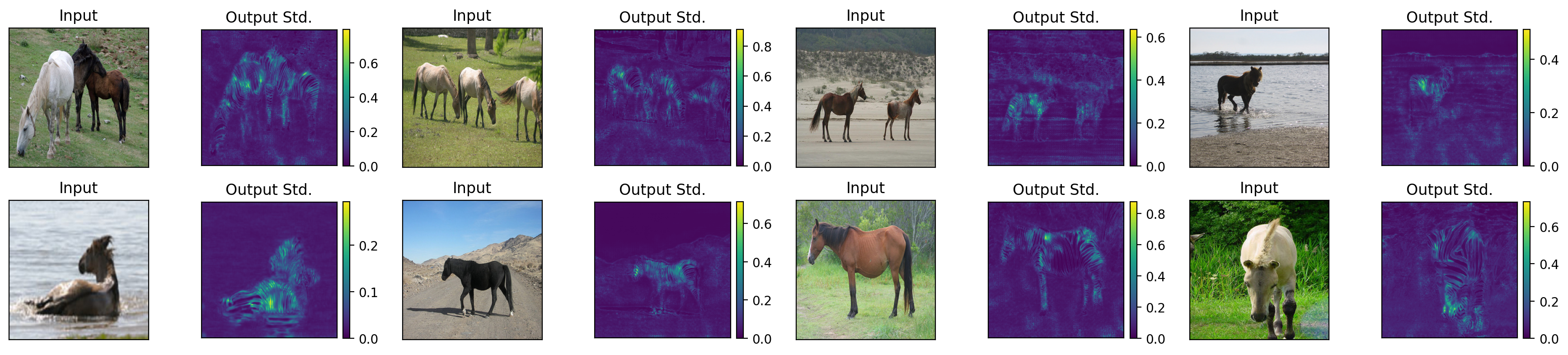}
\caption{UNSB output pixel-wise standard deviation given input.}
\label{fig:stoc_mult}
\end{figure}

\begin{table}[h]
\begin{minipage}[b]{.5\textwidth}
\begin{center}  
\resizebox{0.8\textwidth}{!}{
\begin{tabular}{ccccc}
\toprule
\textbf{Method} & Time & NFE & FID$\downarrow$ & KID$\downarrow$ \\
\cmidrule{1-5}
EGSDE & 60 & 500 & 45.2 & 3.62 \\
StarGAN v2 & 0.006 & 1 & 48.55 & 3.20 \\
NOT & 0.006 & 1 & 51.5 & 3.17 \\
Ours & 0.045 & 5 & \textbf{37.87} & \textbf{1.54} \\
\bottomrule
\end{tabular}}
\end{center}
\caption{Male2Female results.}
\label{table:male2female}
\end{minipage}\hfill
\begin{minipage}[b]{0.5\linewidth}
\centering
\includegraphics[width=1.0\linewidth]{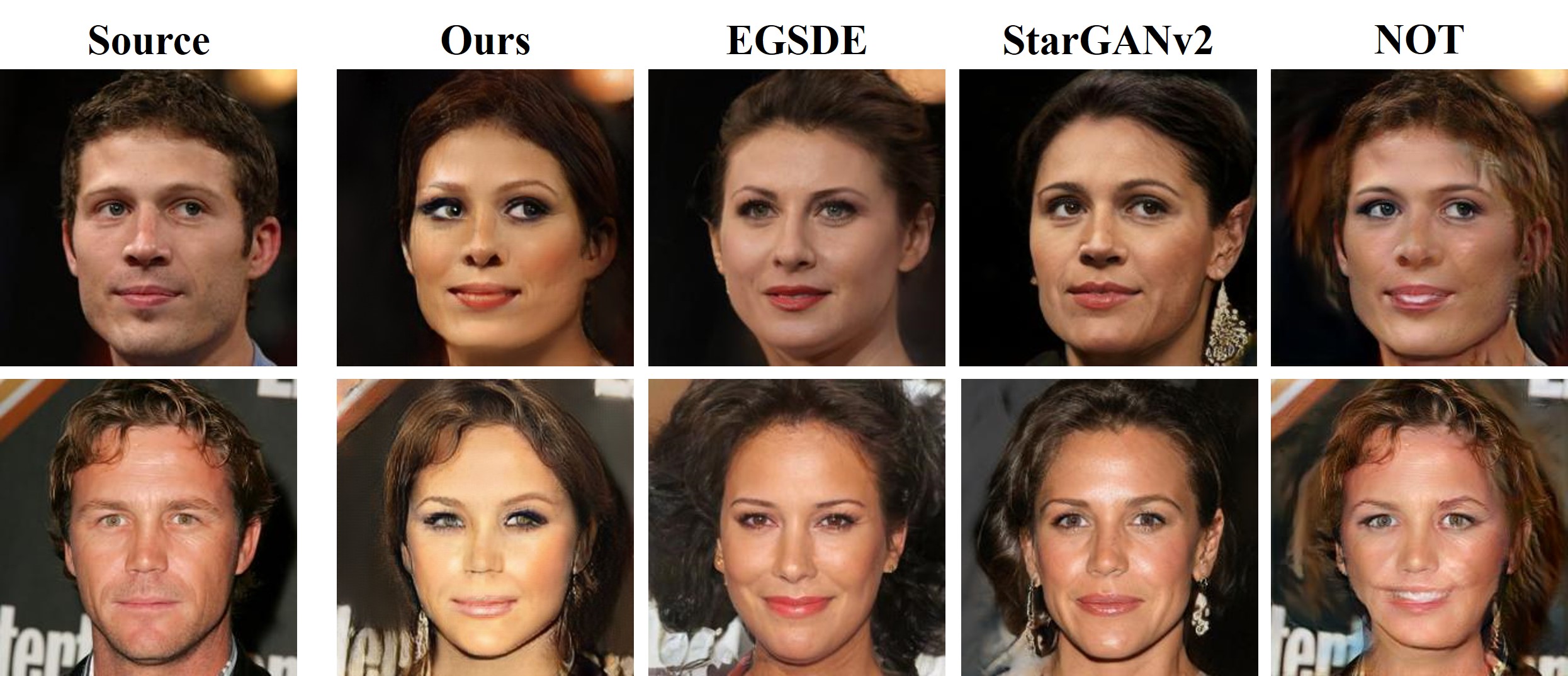}
\captionof{figure}{Male2Female samples.}
\label{fig:male2female}
\end{minipage}
\end{table}

\section{Additional Experiments} \label{append:experiments}

\subsection{Other SB Methods}

In this part, we compare our proposed UNSB with other Schr\"{o}dinger bridge based image translation methods. The recent work of I$^2$SB and InDI require supervised training setting, therefore we cannot compare our proposed method with the baselines. For most of other SB-based methods, they focus on relatively easy problem, where one side of distribution is simple. Since we firstly proposed applying SB-based approach to high-resolution I2I problem, there is no direct baseline to compare. However, we compared our model with SB-based approach of Schr\"{o}dinger Bridge Conditional Flow Matching (SBCFM) \citep{shi2022conditional} by adapting the SBCFM to high-resolution case. 

In Table~\ref{table:cfm} and Fig.~\ref{fig:cfm}, we can see that the baseline SBCFM model could not generate the proper outputs where the outputs are totally unrelated to the input image. The results show that the model could not overcome the curse of dimensionality problem.

\subsection{More Stochasticity Analysis}

In Figure \ref{fig:stoc_mult}, we visualize output pixel-wise standard deviation given input for more samples. We note that all pixel values are normalized into $[0,1]$. A SB between two domains is stochastic, so it does one-to-many translation. Thus, a necessary condition for a model to be a SB is stochasticity. Figure \ref{fig:stoc_mult} shows that UNSB satisfies this necessary condition.

Greater std for foreground pixels tells us UNSB does one-to-many generation, and generated images lie in the target domain. For instance, a model which simply adds Gaussian noise also does one-to-many generation, but is not meaningful. UNSB output shows high variation for locations which are relevant to the target domain (Zebra).

\subsection{Male2Female Translation}

We provide results on the Male2Female translation task with the CelebA-HQ-256 dataset \citep{karras2018}. Baseline methods are EGSDE \citep{zhao2022}, StarGAN v2 \citep{choi2022}, and NOT. UNSB is trained according to Appendix \ref{append:details}. We used official pre-trained models for EGSDE and StarGAN v2 and trained NOT from scratch using the official code. We use the evaluation protocol in CUT \citep{cut} to evaluate all models, i.e., we use test set images to measure FID. We note that \citep{zhao2022} and \citep{choi2022} use a different evaluation protocol in their papers, (they use train set images to measure FID) so the numbers in our paper and their papers may differ.

Results are shown in Table \ref{table:male2female} and Figure \ref{fig:male2female}. We note that UNSB produces outputs closer to the target image domain, as evidenced by FID and KID. Also, EGSDE, StarGAN v2, and NOT often fail to preserve input structure information. For instance, EGSDE fails to preserve eye direction or hair style. StarGAN v2 hallucinates earrings and fails to preserve hair style. NOT changes mouth shape (teeth are showing for outputs).

\subsection{Entropic OT Benchmark}

\begin{wraptable}{r}{0.4\linewidth}
\begin{center}
\resizebox{0.3\textwidth}{!}{
\begin{tabular}{cccc}
\toprule
 Method & ENOT & Ours & $p(\bm{x})$ \\
\cmidrule{1-4}
cFID & 40.5 & 63.0 & 104 \\
\bottomrule
\end{tabular}}
\end{center}
\caption{EOT benchmark.}
\label{table:eot_benchmark}
\end{wraptable}

We evaluate UNSB on the entropic OT benchmark provided by \cite{gushchin2023bench} for $64 \times 64$ resolution images. We use $\epsilon = 0.1$ (using the notation in \citep{gushchin2023bench}) and follow their training and evaluation protocol. Specifically, we trained UNSB to translate noisy CelebA images (i.e., $\bm{y}$-samples) to clean CelebA images (i.e., $\bm{x}$-samples). For comparison, we provide cFID numbers for SB $p(\bm{x} | \bm{y})$ learned by ENOT \citep{gushchin2023enot} and unconditional $p(\bm{x} | \bm{y}) = p(\bm{x})$ where we use 5$k$ ground-truth samples from $p(\bm{x})$. cFID for ENOT is taken from Table 7 in \cite{gushchin2023bench}.

In Table \ref{table:eot_benchmark}, we see that UNSB performs much better than the unconditional map $p(\bm{x} | \bm{y}) = p(\bm{x})$ but not quite as well as ENOT. We emphasize that this is only a preliminary result with ad-hoc hyper-parameter choices due to time constraints, and we expect better results with further tuning. Some points of improvement are:

\begin{itemize}
    \item \textbf{Using more samples to measure cFID.} We only used 100 $\bm{x}$-samples for each $\bm{y}$-sample. The benchmark provides 5$k$ $\bm{x}$-samples for each $\bm{y}$-sample, so using more $\bm{x}$-samples per $\bm{y}$-sample could improve the cFID.
    \item \textbf{Better architecture design.} UNSB architecture is specialized to $\geq 256 \times 256$ resolution images, whereas the benchmark consists of $64 \times 64$ resolution images. Currently, we set UNSB networks for this task as $\resize(\bm{x},64) \circ G(\bm{x}) \circ \resize(\bm{x},256)$, where $\resize(\bm{x},s)$ is a function which resizes images to $s \times s$ resolution, and $G(\bm{x})$ is the UNSB network used in our main experiments.
    \item \textbf{Better regularization.} CUT regularization is specialized to translation between clean images, whereas the benchmark consists of translation between noisy and clean images.
    \item \textbf{Longer training.} We only trained UNSB for 150$k$ iterations. On, for instance, Male2Female, we needed to train about 1.7$m$ iterations to get good results.
\end{itemize}

\newpage

% \begin{figure}[t]
% \centering
% \includegraphics[width=0.5\linewidth]{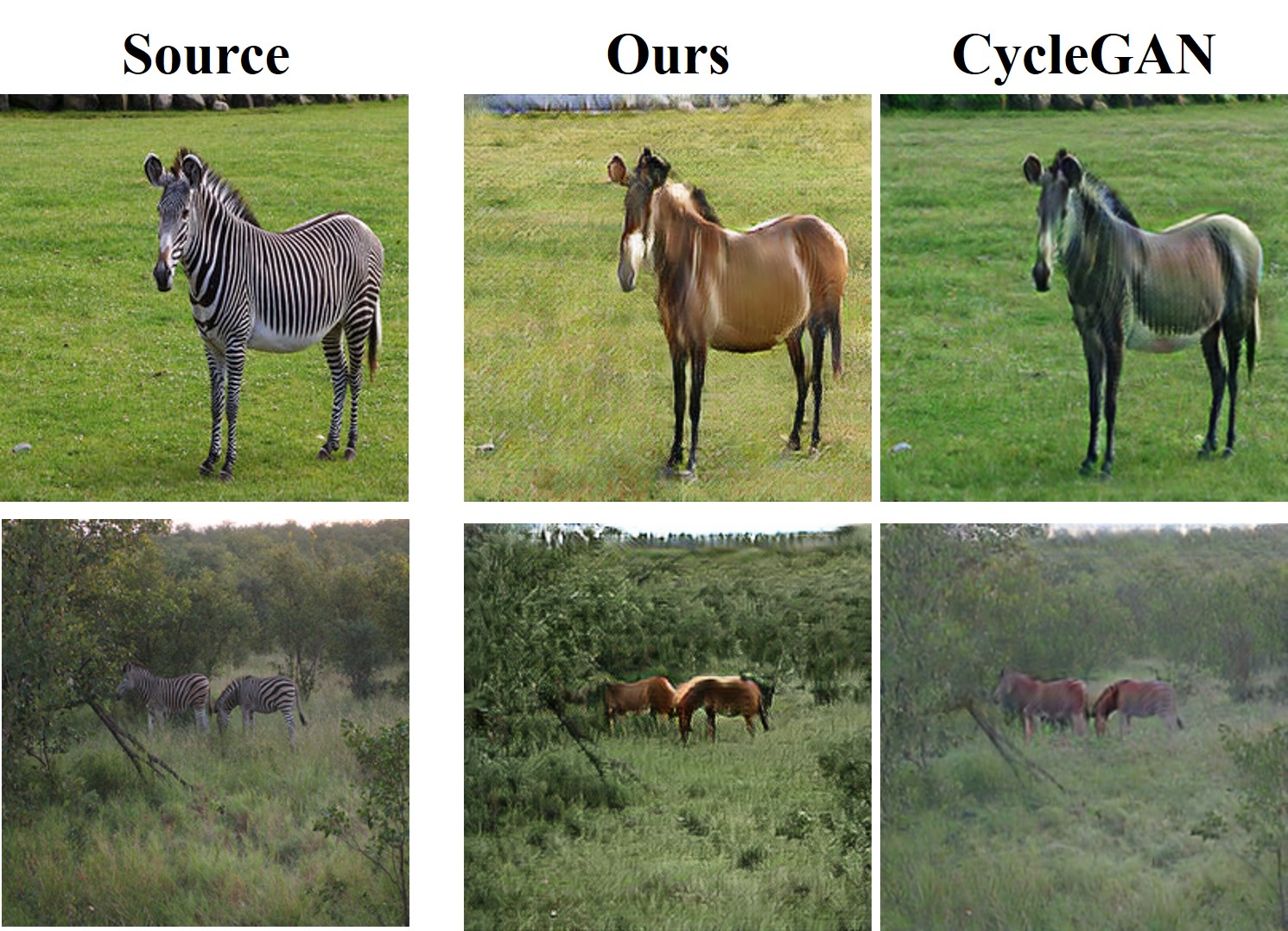}
% \caption{Zebra2Horse samples.}
% \label{fig:zebra2horse}
% \end{figure}

\subsection{Zebra2Horse Translation}

\begin{wraptable}{r}{0.45\linewidth}
\begin{minipage}[b]{.4\textwidth}
\centering
\includegraphics[width=0.7\linewidth]{}
\captionof{figure}{Zebra2Horse samples.}
\label{fig:zebra2horse}
\end{minipage}
\vspace{0.3cm} \\
\begin{minipage}[b]{.4\textwidth}
\begin{center}
\resizebox{0.9\textwidth}{!}{
\begin{tabular}{ccccc}
\toprule
Method & Time & NFE & FID & KID \\
\cmidrule{1-5}
CycleGAN & 0.004 & 1 & 135.75 & 2.998 \\
Ours  & 0.045 & 5 & \textbf{134.09} & \textbf{2.581} \\
\bottomrule
\end{tabular}}
\end{center}
\caption{Zebra2Horse results.}
\label{table:zebra2horse}
\end{minipage}
\vspace{0.3cm} \\
\begin{minipage}[b]{.4\textwidth}
\begin{center}
\resizebox{0.9\textwidth}{!}{
\begin{tabular}{ccccc}
\toprule
Disc. & Reg. & NFE & FID & KID \\
\cmidrule{1-5}
\xmark   & \xmark & 5 & 254.2 & 23.91 \\
Instance & \xmark & 1 & 185.5 & 8.732 \\
Patch    & \xmark & 1 & 155.5 & 7.697 \\
Patch    & \cmark & 1 & 75.6 & 0.491 \\
Patch    & \cmark & 5 & 73.9 & 0.421 \\
\bottomrule
\end{tabular}}
\end{center}
\caption{Summer2Winter ablation.}
\label{table:ablation_append}
\end{minipage}
\vspace{-2.0cm}
\end{wraptable}

We show that UNSB is capable of reverse translation on tasks in our paper as well. Specifically, we compare CycleGAN and UNSB on Zebra2Horse translation task. In Table \ref{table:zebra2horse}, we see that UNSB beats CycleGAN in terms of both FID and KID. In Figure \ref{fig:zebra2horse}, we observe UNSB successfully preserves input structure as well.

\subsection{More Ablation Study}

In Table \ref{table:ablation_append}, we provide another ablation study result on the Summer2Winter translation task, where we observe similar trends as before. Adding Markovian discrminator, regularization, and multi-step sampling monotonically increases the performance. This tells us the components of UNSB play orthogonal roles in unpaired image-to-image translation.

\subsection{Additional Samples}

\textbf{$256 \times 256$ images.} In Fig.~\ref{fig:UNSB_add},~\ref{fig:UNSB_add2}, we show additional generated samples from our proposed UNSB and baseline models for our tasks in our main paper.

\textbf{$512 \times 512$ images.} In Fig. \ref{fig:cat2dog}, we show samples from UNSB for Cat2Dog translation with AFHQ $512 \times 512$ resolution images, NFE $= 5$. This is a preliminary result with UNSB at training epoch 100.

\begin{figure}[t]
\centering
\includegraphics[width=1.0\linewidth]{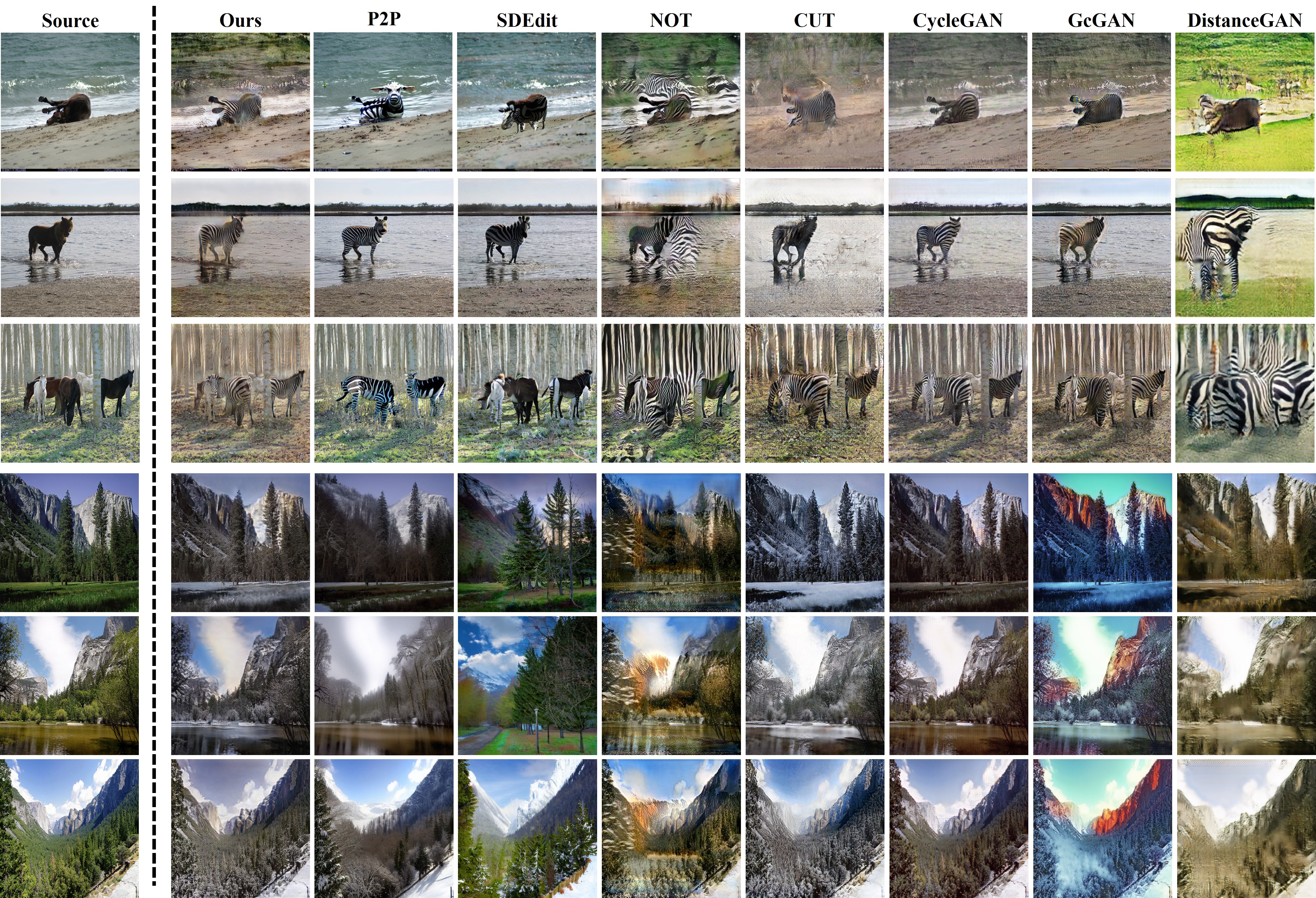}
\caption{Additional samples on Horse2Zebra and Summer2Winter.}
% \vspace{-0.4cm}
\label{fig:UNSB_add}
\end{figure}

\begin{figure}[t]
\centering
\includegraphics[width=0.8\linewidth]{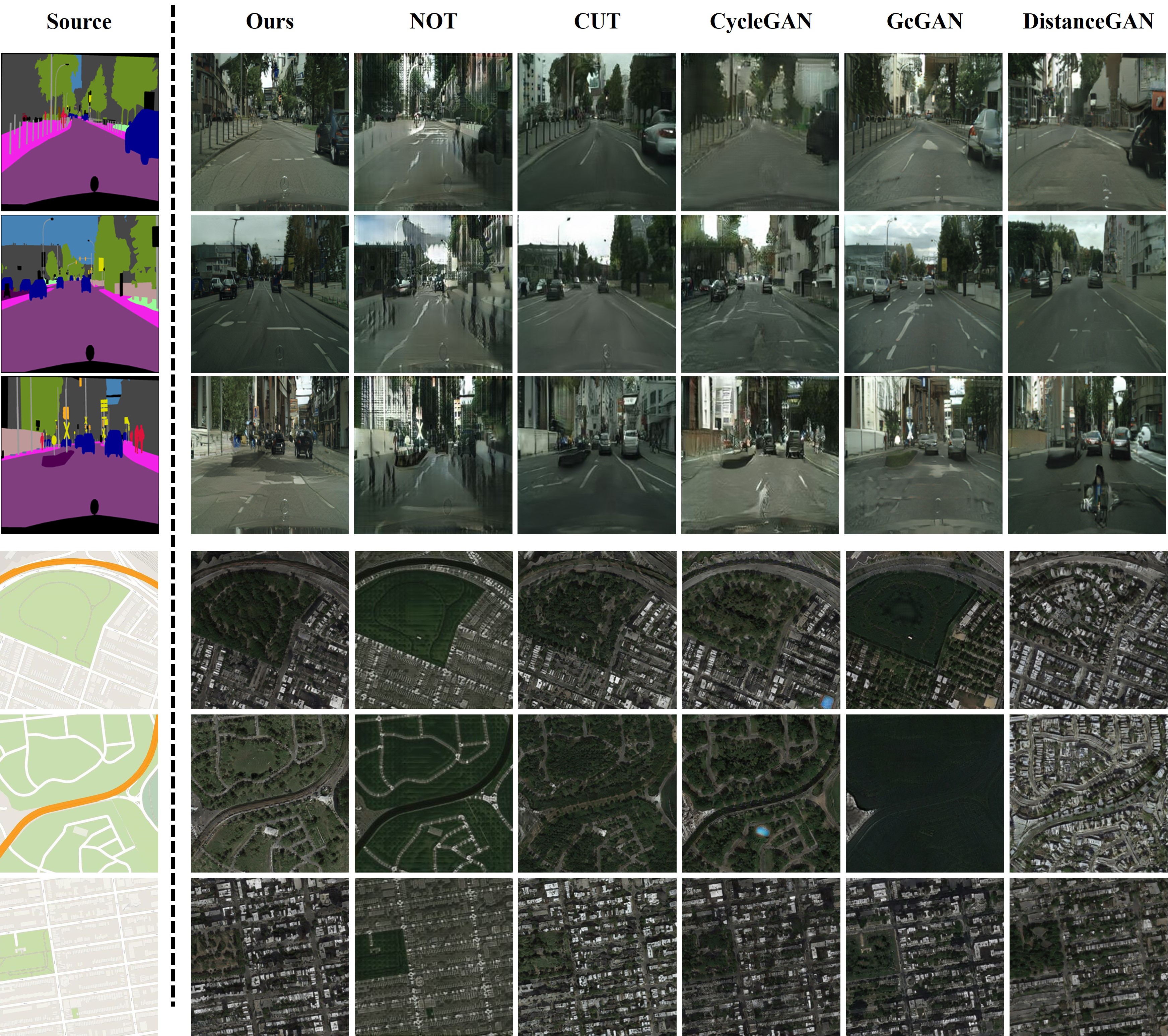}
\caption{Additional samples on Label2Cityscapes and Map2Satellite dataset.}
% \vspace{-0.4cm}
\label{fig:UNSB_add2}
\end{figure}

\begin{figure}[t]
\centering
\includegraphics[width=0.6\linewidth]{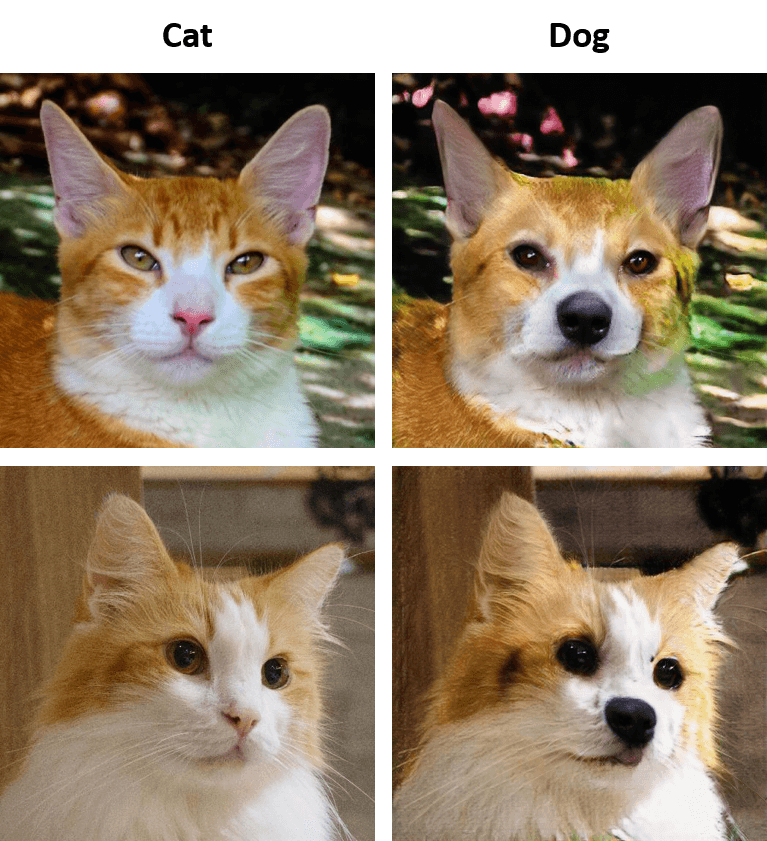}
\caption{Cat2Dog on AFHQ $512 \times 512$ with UNSB. Preliminary result at training epoch 100.}
\label{fig:cat2dog}
\end{figure}

\newpage

\section{Discussion on Scene-Level vs. Object-Level Transfer}

We note that unpaired image-to-image translation tasks can roughly be categorized into scene-level transfer and object-level transfer. In scene-level transfer (e.g., Horse2Zebra, Summer2Winter, etc.), mostly color and texture need to be altered during translation. In object-level transfer (e.g., Male2Female, Cat2Dog, etc.), object identities need to be altered during translation. We discuss differences between UNSB and diffusion-based translation methods on scene-level and object-level transfer tasks, respectively.

\textbf{Scene-level transfer.} While diffusion-based methods such as SDEdit are capable of translating between high-resolution images, we note that they do not perform so well on scene-level translation tasks considered in our paper. For instance, see Table \ref{table:main} for SDEdit and P2P results on Horse2Zebra, where they show 97.3 FID and 60.9 FID, respectively. UNSB achieves 35.7 FID.

Figure \ref{fig:UNSB_result_natural} provides some clue as to why diffusion-based methods do not perform so well. P2P fails to preserve structural integrity (in the first row, the direction of the right horse is reversed) and SDEdit fails to properly translate images to zebra domain. Similar phenomenon occurs for Summer2Winter task as well. On the other hand, UNSB does not suffer from this problem as it uses a Markovian discriminator along with CUT regularization.

\textbf{Object-level transfer.} Diffusion-based methods perform reasonably on object-level translation tasks such as Male2Female or Wild2Cat. We speculate this is because object-level translation tasks use images with cropped and centered subjects, so it is relatively easy to preserve structural integrity. In fact, as shown in Figure \ref{fig:i2i_sk}, semantically meaningful pairs on such dataset pairs can be obtained even with Sinkhorn-Knopp (a computational optimal transport method), which does not rely on deep learning. However, Sinkhorn-Knopp is unable to find meaningful pairs on Horse2Zebra, where subjects are not cropped and centered.

\begin{figure}[h]
\centering
\begin{subfigure}{0.3\linewidth}
\centering
\includegraphics[width=1.0\linewidth]{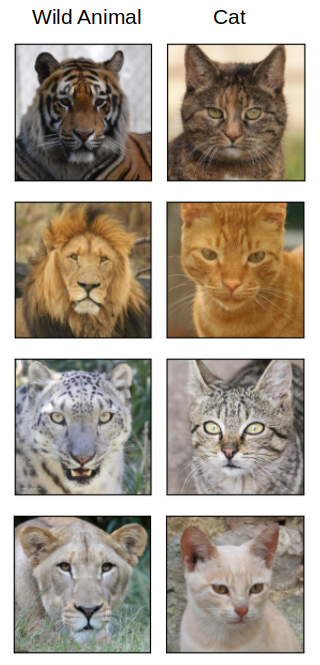}
\caption{Wild2Cat}
\label{fig:wild2cat_sk}
\end{subfigure}
\hspace{1.0cm}
\begin{subfigure}{0.3\linewidth}
\centering
\includegraphics[width=1.0\linewidth]{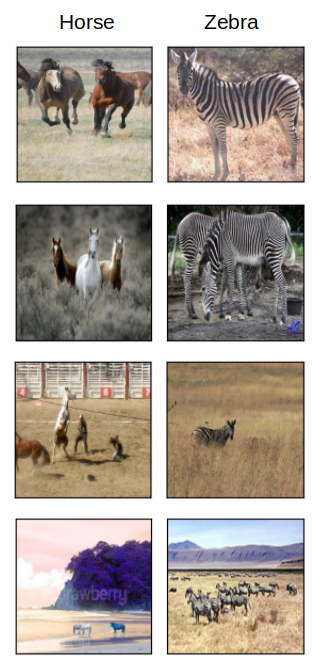}
\caption{Horse2Zebra}
\label{fig:horse2zebra_sk}
\end{subfigure}
\caption{Unpaired image-to-image translation results with Sinkhorn-Knopp.}
\label{fig:i2i_sk}
\end{figure}

\end{document}